\def\bag{{\cal B}}
\def\prenet#1#2{\,\ensuremath{\stackrel{#1}{\bullet}\!{#2}}}
\def\postnet#1#2{\ensuremath{{#2}\!\kern-.05ex\stackrel{#1}{\bullet}}\,}
\def\bplus{\uplus}
\def\bmin{\setminus}
\def\pre#1{\ensuremath{\bullet{#1}}}
\def\post#1{\ensuremath{{#1}\kern-.05ex\bullet}}
\def\mi#1{\mathit{#1}}
\def\nomove{\gg}
\def\tproj{\!\!\upharpoonright}
\newcommand{\Nat}{\ensuremath{\mathrm{I\kern-1.5pt N}}}
\journal{Information Systems}
\newtheorem{definition}{Definition}
\newtheorem{proposition}{Proposition}
\newtheorem{lemma}{Lemma}
\newtheorem{theorem}{Theorem}
\begin{document}

\begin{frontmatter}



\title{Conformance Checking over Uncertain Event Data}


\author{Marco Pegoraro\corref{cor1}}
\cortext[cor1]{Corresponding author}
\ead{pegoraro@pads.rwt-aachen.de}
\ead[url]{http://mpegoraro.net/}
\author{Merih Seran Uysal\corref{}}
\author{Wil M.P. van der Aalst\corref{}}
\address{Chair of Process and Data Science (PADS), Department of Computer Science, RWTH Aachen University, Aachen, Germany}

\begin{abstract}
The strong impulse to digitize processes and operations in companies and enterprises have resulted in the creation and automatic recording of an increasingly large amount of process data in information systems. These are made available in the form of \emph{event logs}. Process mining techniques enable the process-centric analysis of data, including automatically discovering process models and checking if event data conform to a given model. In this paper, we analyze the previously unexplored setting of uncertain event logs. In such event logs uncertainty is recorded explicitly, i.e., the time, activity and case of an event may be unclear or imprecise. In this work, we define a taxonomy of uncertain event logs and models, and we examine the challenges that uncertainty poses on process discovery and conformance checking. Finally, we show how upper and lower bounds for conformance can be obtained by aligning an uncertain trace onto a regular process model.
\end{abstract}

%

\begin{keyword}


Process Mining \sep Uncertain Data \sep Partial Order

\end{keyword}

\end{frontmatter}


\section{Introduction}\label{sec:introduction}
Over the last decades, the concept of \emph{process} has become more and more central in formally describing the activities of businesses, companies and other similar entities, structured in specific steps and phases. A process is thus defined as a well-structured set of activities, potentially performed by multiple actors (\emph{resources}), which contribute to the completion of a specific task or to the achievement of a specific goal. In this context, a very important notion is the concept of \emph{case}, that is, a single instance of a process. For example, in a healthcare process, a case may be a single hospitalization of a patient, or the patient themself; if the process belongs to a credit institution, a case may be a loan application from a customer, and so on. The case notion allows us to define a process as a procedure that defines the steps needed to handle cases from inception to completion. A \emph{process model} defines such a procedure, and can be expressed in a number of different formalisms (transition systems, Petri nets, BPMN and UML diagrams, and many more). Consequently, the study and adoption of analysis techniques specifically customized to deal with process data and process models has enabled the bridging of business administration and data science and the development of dedicated disciplines like \emph{business intelligence} and \emph{Business Process Management} (BPM).

The processes that govern the innards of business companies are increasingly supported by software tools. Performing specific activities is both aided and recorded by \emph{Process-Aware Information Systems} (PAISs), which support the definition and management of processes. The information regarding the execution of processes can then be extracted from PAISs in the form of an \emph{event log}, a database or file containing the digital footprint of the operations carried out in the context of the execution of a process and recorded as \emph{events}. Event logs can vary in form, and contain differently structured information depending on the information system that enacted data collection in the organization. Although many different event attributes can be recorded, it is typically assumed that three basic features of an event are available in the log: the time in which the event occurred, the activity that has been performed, and the case identifier to which the event belong. This last attribute allows to group events in clusters belonging to the same case, and these resulting clusters (usually organized in sequences sorted by timestamp) are called \emph{process traces}. The discipline of \emph{process mining} is concerned with the automatic analysis of event logs, with the goal of extracting knowledge regarding e.g. the structure of the process, the conformity of events to a specific normative process model, the performance of the agents executing the process, the relationships between groups of actors in the process.

In this paper, we will consider the analysis of a specific class of event logs: logs that contain \emph{uncertain event data}. Uncertain events are recordings of executions of specific activities in a process which are enclosed with an indication of uncertainty in the event attributes. Specifically, we consider the case where the attributes of an event are not recorded as a precise value but as a range or a set of alternatives.

Uncertain event data are common in practice, but uncertainty is often not explicit. The \emph{Process Mining Manifesto}~\cite{van2011process} describes a fundamental property of event data as \emph{trustworthiness}, the assumption that the recorded data can be considered correct and accurate. In a general sense, uncertainty -- as defined here -- is an explicit absence of trustworthiness, with an indication of uncertainty recorded together with the event data. In the taxonomy of event data proposed in the Manifesto, the logs at the two lower levels of quality frequently lack trustworthiness, and thus can be uncertain. This encompasses a wide range of processes, such as event logs of document and product management systems, error logs of embedded systems, worksheets of service engineers, and any process recorded totally or partially on paper.
There are many possible causes of uncertainty:
\begin{itemize}
	\item \emph{Incorrectness}. In some instances, the uncertainty is simply given by errors that occurred while recording the data themselves. Faults of the information system, or human mistakes in a data entry phase can all lead to missing or altered event data that can be subsequently modeled as uncertain event data.
	\item \emph{Coarseness}. Some information systems have limitations in their way of recording data - often tied to factors like the precision of the data format - such that the event data can be considered uncertain. A typical example is an information system that only records the date, but not the time, of the occurrence of an event: if two events are recorded in the same day, the order of occurrence is lost. This is an especially common circumstance in the processes that are, partially or completely, recorded on paper and then digitalized. Another factor that can lead to uncertainty in the time of recording is the information system being overloaded and, thus, delaying the recording of data. This type of uncertainty can also be generated by the limited sensibility of a sensor.
	\item \emph{Ambiguity}. In some cases, the data recorded is not an identifier of a certain event attribute; in these instances, the data needs to be interpreted, either automatically or manually, in order to obtain a value for the event attribute. Uncertainty can arise if the meaning of the data is ambiguous and cannot be interpreted with precision. Examples include data in the form of images, text, or video.
\end{itemize}
These factors cause the presence of \emph{implicit} uncertainty in the event log. It is important to note that, in order to be analyzed, these indications of imprecision or incorrectness have to be translated into \emph{explicit} uncertainty. Explicit uncertainty is contained directly in the event log in the form of event attributes. It is possible to think of explicit uncertainty as metadata complementing the information regarding events. This metadata describes the type and magnitude of the imprecision affecting some event attributes, which might be part of the control-flow perspective or an additional data perspective present in the event log.

Aside from the possible causes, we can individuate other types of uncertain event logs based on the frequency of uncertain data. Uncertainty can be \emph{infrequent}, when a specific attribute is only seldomly recorded together with explicit uncertainty; the uncertainty is rare enough that uncertain events can be considered outliers. Conversely, \emph{frequent} uncertain behavior of the attribute is systematic, pervasive in a high number of traces, and thus not to be considered an outlier. The uncertainty can be considered part of the process itself. These concepts are not meant to be formal, and are laid out to distinguish between logs that are still processable regardless of the uncertainty, and logs where the uncertainty is too invasive to analyze them with existing process mining techniques.

In some contexts, the causes of uncertainty in event data can be resolved at the source, by acting directly on the process and on the tools supporting operations within it. For instance, a natural way to eliminate uncertainty in data recording is to automate tasks within the process as much as possible. This is a popular solution in applications like industry and manufacturing, where the actual tasks already involve machinery or automated systems. While we discuss the possibility of employing automation tools in processes where the majority of agents are humans in Section~\ref{sec:related_unc}, supporting a process with automation or auditing software that oversees and records the actions of agents is very challenging, for both technical and ethical reasons. Automatically recording data across different platforms, formats, and information systems by different producers and hosted by different service providers is often unfeasible, and there are legal reasons that might prevent it, such as breaches of confidentiality. In some jurisdictions, legally valid documents must be on paper, making real-time automatic data recording outright impossible. For these reasons, while eliminating uncertainties through automation is an advisable choice, analyzing data containing a description of such uncertainties is sometimes the only analysis technique that can deliver approximate but trustworthy results.

The diagram in Figure~\ref{fig:schema} shows an overview of the main elements of process mining over uncertainty. The schema shows some additional elements with respect to classical process mining: we can see that we can combine raw process data from information systems (containing implicit uncertainty) with domain knowledge provided by a process expert to obtain an uncertain event log, which contains explicit uncertainty. The data in an uncertain event log can be abstracted in a graph representation, which enables the inspection of its causes. Lastly, the graph representations also allows to perform the tasks of process discovery and conformance checking on uncertain event data.

\begin{figure}
	\centering
	\includegraphics[width=1\columnwidth]{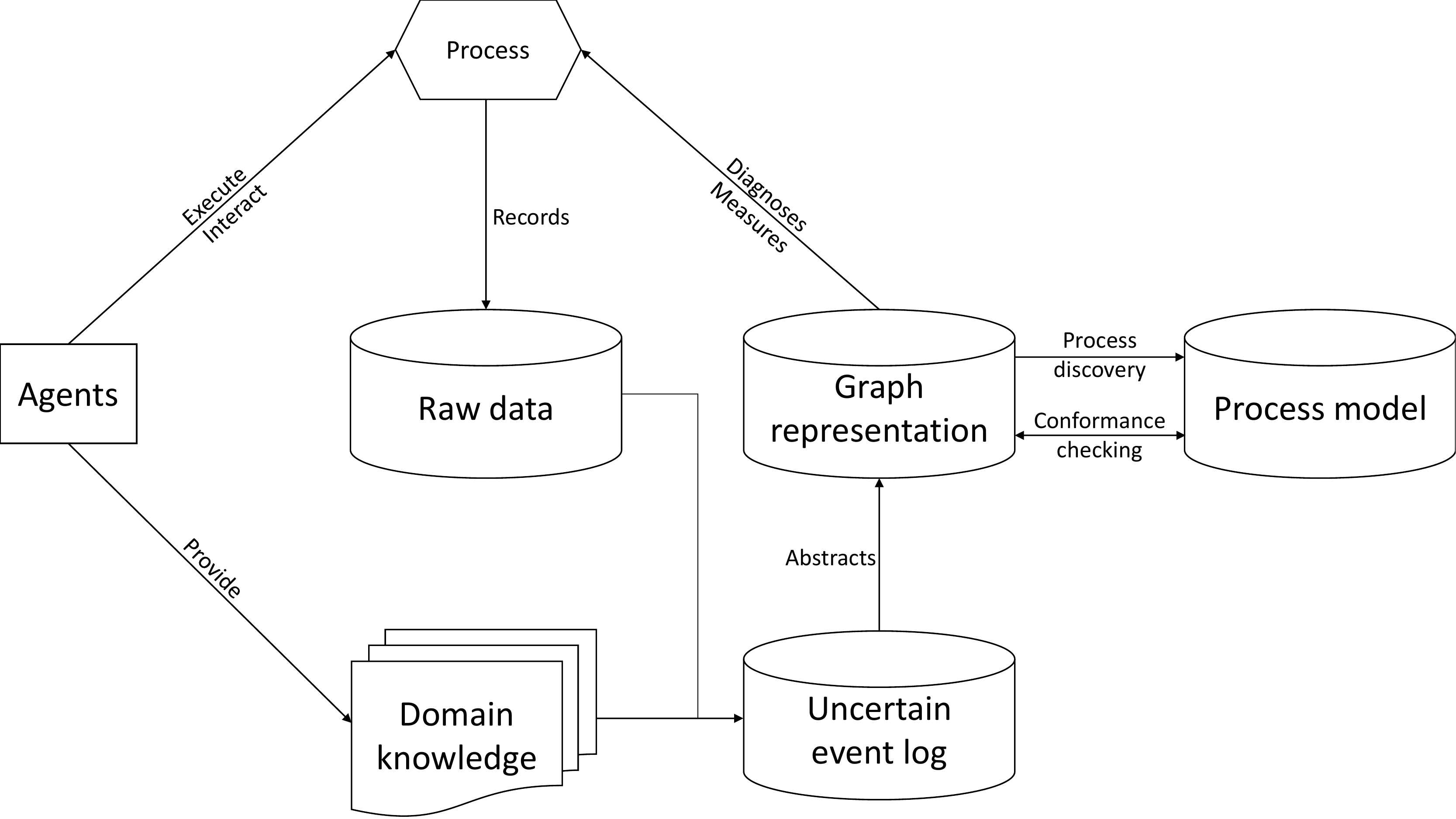}
	\caption{The overall schema for process mining over uncertain event data.}
	\label{fig:schema}
\end{figure}

In this paper, we propose a taxonomy of the different types of explicit uncertainty in process mining, together with a formal, mathematical formulation. As an example of practical application, we will consider the case of conformance checking~\cite{carmona2018conformance}, and we will apply it to uncertain data by assessing what are the upper and lower bounds on the conformance score for possible values of the attributes in an uncertain trace.

The main drivers behind this work is to provide the means to treat uncertainty as a relevant part of a process; thus, we aim not to filter it out but to model and explain it. In conclusion, there are two novel aspects regarding uncertain data that we intend to address in this work. The first novelty is the \emph{explicitness of uncertainty}: we work with the underlying assumption that the actual value of the uncertain attribute, while not directly provided, is described formally. This is the case when meta-information about the uncertainty in the attribute is available, either deduced from the features of the information system(s) that record the logs or included in the event log itself. Note that, as opposed to all previous work on the topic, the fact that uncertainty is explicit in the data means that the concept of uncertain behavior is completely separated from the concept of infrequent behavior. The second novelty is the \emph{explicit modeling of uncertainty}: we consider uncertainty part of the process. Instead of filtering or cleaning the log, we introduce the uncertainty perspective in process mining by extending the currently available techniques to incorporate it.

In summary, the paper aims to discuss the following research questions:
\begin{itemize}
	\item What is uncertainty, and in which ways can it manifest within event data?
	\item How can uncertain event data be processed to retain all information contained within it?
	\item How can we exploit this representation to solve classical process mining tasks, such as conformance checking?
\end{itemize}

The rest of this paper is organized as follows. Section~\ref{sec:taxonomy} proposes a taxonomy of the different possible types of uncertain process data. Section~\ref{sec:definitions} contains the formal definitions needed to manage uncertainty. Section~\ref{sec:unc} presents the main contribution of this paper, a framework able to describe an array of types and classifications of uncertain behavior. Section~\ref{sec:conformance} describes a practical application of process mining over uncertain event data, the case of conformance checking through alignments. Section~\ref{sec:experiments} shows experimental results on computing conformance checking scores for synthetic uncertain data, as well as a case of application on real-life data. Section~\ref{sec:related} discusses previous and related work on the management of uncertain data and on the topic of conformance checking. Finally, Section~\ref{sec:conclusion} concludes the paper and discusses future work.

\section{A Taxonomy of Uncertain Event Data}\label{sec:taxonomy}
The goal of this section of the paper is to propose a categorization of the different types of uncertainty that can appear in process mining. In process management, a central concept is the distinction between the data perspective (the event log) and the behavioral perspective (the process model). The first one is a static representation of process instances, the second summarizes the behavior of a process. Both can be extended with a concept of explicit uncertainty: this concept also implies an extension of the process mining techniques that have currently been implemented.

In this paper, we will focus on uncertainty in event data, rather than applying the concept of uncertainty to models. Specifically, we will consider computing the conformance score of uncertain process data on classical models, extending the approach shown in~\cite{pegoraro2019mining}. An application of process discovery in the setting of uncertain event data has been presented in~\cite{pegoraro2019discovering}.

We can individuate two different notions of uncertainty:

\begin{itemize}
	\item \emph{Strong uncertainty}: the possible values for the attributes are known, but the probability that the attribute will assume a certain instantiation is unknown or unobservable.
	\item \emph{Weak uncertainty}: both the possible values of an attribute and their respective probabilities are known.
\end{itemize}

In the case of a discrete attribute, the strong notion of uncertainty consists on a set of possible values assumed by the attribute. In this case, the probability for each possible value is unknown. Vice-versa, in the weak uncertainty scenario we also have a discrete probability distribution defined on that set of values.
In the case of a continuous attribute, the strong notion of uncertainty can be represented with an interval for the variable. Notice that an interval does not indicate a uniform distribution; there is no information on the likelihood of values in it. Vice-versa, in the weak uncertainty scenario we also have a probability density function defined on a certain interval. Table~\ref{table:unctypes} summarizes these concepts. This leads to very simple representations of explicit uncertainty.

In this paper, we consider only the control flow and time perspective of a process -- namely, the attributes of the events that allow us to discover a process model. These are the unique identifier of a process instance (case ID), the timestamp (often represented by the distance from a fixed origin point, e.g. the \emph{Unix Epoch}), and the activity identifier of an event. Case IDs and activities are values chosen from a finite set of possible values; they are discrete variables. Timestamps, instead, are represented by numbers and thus are continuous variables.

\pgfmathdeclarefunction{gauss}{2}{%
	\pgfmathparse{1/(#2*sqrt(2*pi))*exp(-((x-#1)^2)/(2*#2^2))}%
}

\begin{table}[H]
	\centering
	\begin{tabular}{|c|c|c|}
		\hline
		& \textbf{Weak uncertainty} & \textbf{Strong uncertainty} \\ \hline
		\textbf{Discrete data}   &
		\begin{tabular}[c]{@{}c@{}} Discrete probability distribution\\ \\
			\begin{minipage}{.25\textwidth}
				\resizebox {\textwidth} {!} {
					\begin{tikzpicture}
					\begin{axis}[ybar interval, ymax=55,ymin=0, minor y tick num = 3]
					\addplot coordinates { (0, 5) (5, 35) (10, 50) (15, 30) (20, 15) (25, 0) };
					\end{axis}
					\end{tikzpicture}
				}
			\end{minipage}
		\end{tabular} &
		\begin{tabular}[c]{@{}c@{}} Set of possible values \\ \\ $\{ x, y, z, \dots \}$
		\end{tabular} \\ \hline
		\textbf{Continuous data} &
		\begin{tabular}[c]{@{}c@{}} Probability density function \\ \\
			\begin{minipage}{.25\textwidth}
				\resizebox {\textwidth} {!} {
					\begin{tikzpicture}
					\begin{axis}[every axis plot post/.append style={
						mark=none,domain=-2:3,samples=50,smooth}, 
						axis x line*=bottom, 
						axis y line*=left, 
						enlargelimits=upper]
					\addplot {gauss(1,0.5)};
					\end{axis}
					\end{tikzpicture}
				}
			\end{minipage}
		\end{tabular} &
		\begin{tabular}[c]{@{}c@{}} Interval \\ \\ $\{ x \in \mathbb{R} \mid a \leq x \leq b \}$
		\end{tabular} \\ \hline
	\end{tabular}
	\caption{The four different types of uncertainty.}
	\label{table:unctypes}
\end{table}

We will also describe an additional type of uncertainty, which lies on the event level rather than the attribute level:

\begin{itemize}
	\item \emph{Indeterminate event}: the event may have not taken place even though it was recorded in the event log. Indeterminate events are indicated with a ? symbol, while determinate (regular) events are marked with a ! symbol.
\end{itemize}

\begin{table}[!htb]
	\centering
	\caption{An example of a strongly uncertain trace. For the sake of clarity, the timestamp field only reports dates.}
	\label{table:uncertaintracestrong}
	\begin{tabular}{cccc}
		\textbf{Case ID}                  & \textbf{Timestamp}                                                                                      & \textbf{Activity}                & \multicolumn{1}{l}{\textbf{Indet. event}} \\ \hline
		\multicolumn{1}{|c|}{\{ID327, ID412\}}    & \multicolumn{1}{c|}{2011-12-05}                                                                   & \multicolumn{1}{c|}{A}           & \multicolumn{1}{c|}{!}                    \\ \hline
		\multicolumn{1}{|c|}{ID327}           & \multicolumn{1}{c|}{2011-12-07}                                                                   & \multicolumn{1}{c|}{\{B, C, D\}} & \multicolumn{1}{c|}{!}                    \\ \hline
		\multicolumn{1}{|c|}{ID327}           & \multicolumn{1}{c|}{\begin{tabular}[c]{@{}c@{}}{[}2011-12-06, 2011-12-10{]}\end{tabular}} & \multicolumn{1}{c|}{D}           & \multicolumn{1}{c|}{?}                    \\ \hline
		\multicolumn{1}{|c|}{ID327}           & \multicolumn{1}{c|}{2011-12-09}                                                                   & \multicolumn{1}{c|}{\{A, C\}}    & \multicolumn{1}{c|}{!}                    \\ \hline
		\multicolumn{1}{|c|}{\{ID327, ID412, ID573\}} & \multicolumn{1}{c|}{2011-12-11}                                                                & \multicolumn{1}{c|}{E}           & \multicolumn{1}{c|}{?}                    \\ \hline
	\end{tabular}
\end{table}

\begin{table}[!htb]
	\caption{An example of a weakly uncertain trace. For the sake of clarity, the timestamp field only reports dates.}
	\label{table:uncertaintraceweak}
	\centering
	\begin{tabular}{cccc}
		\textbf{Case ID}                       & \textbf{Timestamp}                          & \textbf{Activity}                     & \multicolumn{1}{l}{\textbf{Indet. event}} \\ \hline
		\multicolumn{1}{|c|}{\{ID313:0.9, ID370:0.1\}} & \multicolumn{1}{c|}{2011-12-05}       & \multicolumn{1}{c|}{A}                & \multicolumn{1}{c|}{!}                    \\ \hline
		\multicolumn{1}{|c|}{ID313}                & \multicolumn{1}{c|}{2011-12-07}       & \multicolumn{1}{c|}{\{B:0.7, C:0.3\}} & \multicolumn{1}{c|}{!}                    \\ \hline
		\multicolumn{1}{|c|}{ID313}                & \multicolumn{1}{c|}{$\mathcal{N}$(2011-12-08, 2)} & \multicolumn{1}{c|}{D}                & \multicolumn{1}{c|}{?:0.5}                \\ \hline
		\multicolumn{1}{|c|}{ID313}                & \multicolumn{1}{c|}{2011-12-09}       & \multicolumn{1}{c|}{\{A:0.2, C:0.8\}} & \multicolumn{1}{c|}{!}                    \\ \hline
		\multicolumn{1}{|c|}{\{ID313:0.4, ID370:0.6\}} & \multicolumn{1}{c|}{2011-12-11}    & \multicolumn{1}{c|}{E}                & \multicolumn{1}{c|}{?:0.7}                \\ \hline
	\end{tabular}
\end{table}

Examples of strongly and weakly uncertain traces are shown in Tables~\ref{table:uncertaintracestrong} and~\ref{table:uncertaintraceweak} respectively. Additionally, we present a time diagram of the trace in Table~\ref{table:uncertaintracestrong}: this representation shows the time relationship between events in the trace in absolute scale. This diagram is shown in Figure~\ref{fig:ganttintr}

\begin{figure}[h!]
	\centering
	\includegraphics[width=.9\columnwidth]{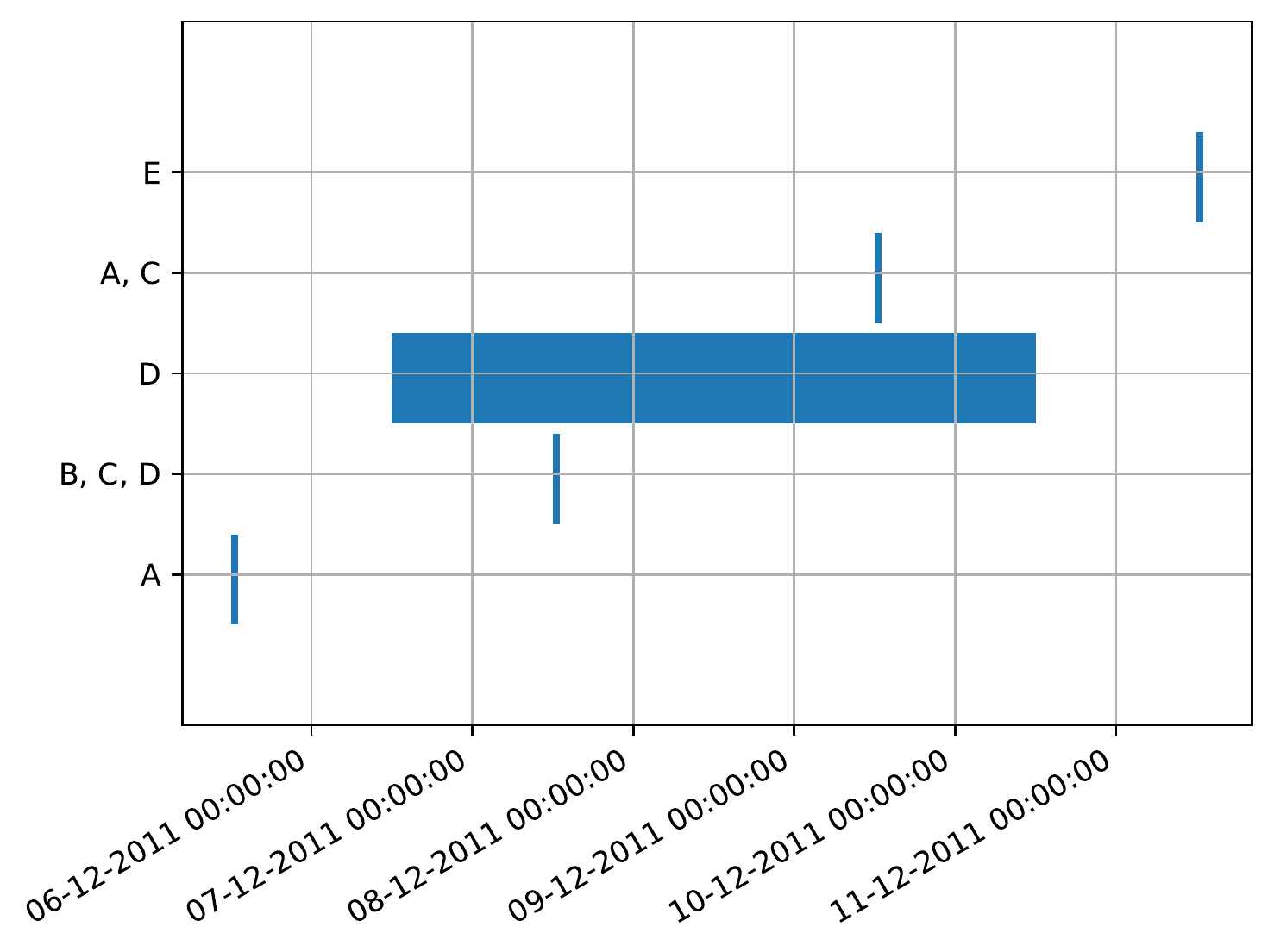}
	\caption{Time diagram of the trace in Table~\ref{table:uncertaintracestrong}. This diagram shows the time information of an uncertain trace in an absolute scale. Note that some types of uncertainty (namely, indeterminate events and uncertainty on case IDs) are not depicted.}
	\label{fig:ganttintr}
\end{figure}


The taxonomy presented in this section is summarized in Table~\ref{table:taxonomy}. This table encodes all types of uncertainty illustrated here. Through this taxonomy, we can indicate the types of uncertainty that might affect an uncertain event log.

\begin{table}[]
	\caption{Summary of the types of uncertainty that can affect a log over the attributes of its events. The last column provides an encoding for each type of uncertainty.}
	\label{table:taxonomy}
	\centering
	\begin{tabular}{|c|c|c|c|}
		\hline
		\textbf{Attribute}                       & \textbf{Attribute type}                          & \textbf{Uncertainty type}                     & \textbf{Encoding} \\ \hline
		\multirow{2}{*}{\begin{tabular}[c]{@{}c@{}}Event\\ (indeterminacy)\end{tabular}} & \multirow{2}{*}{Discrete}   & Weak   & $\text{[E]}_\mathbb{W}$   \\ \cline{3-4} 
		&                             & Strong & $\text{[E]}_\mathbb{S}$   \\ \hline
		\multirow{2}{*}{Case}                                                            & \multirow{2}{*}{Discrete}   & Weak   & $\text{[C]}_\mathbb{W}$   \\ \cline{3-4} 
		&                             & Strong & $\text{[C]}_\mathbb{S}$   \\ \hline
		\multirow{2}{*}{Activity}                                                        & \multirow{2}{*}{Discrete}   & Weak   & $\text{[A]}_\mathbb{W}$   \\ \cline{3-4} 
		&                             & Strong & $\text{[A]}_\mathbb{S}$   \\ \hline
		\multirow{2}{*}{Timestamp}                                                       & \multirow{2}{*}{Continuous} & Weak   & $\text{[T]}_\mathbb{W}$   \\ \cline{3-4} 
		&                             & Strong & $\text{[T]}_\mathbb{S}$   \\ \hline
		\multirow{4}{*}{Other attribute}                                                       & \multirow{2}{*}{Discrete}   & Weak   & $\text{[ATD]}_\mathbb{W}$ \\ \cline{3-4} 
		&                             & Strong & $\text{[ATD]}_\mathbb{S}$ \\ \cline{2-4} 
		& \multirow{2}{*}{Continuous} & Weak   & $\text{[ATC]}_\mathbb{W}$ \\ \cline{3-4} 
		&                             & Strong & $\text{[ATC]}_\mathbb{S}$ \\ \hline
	\end{tabular}
\end{table}

More types of uncertainty can be combined to describe an uncertain event log. For example, an event log with strong uncertainty on events, activities and timestamps would be an $\text{[E, A, T]}_\mathbb{S}$-type log. An uncertain log can also be characterized by different types of uncertainty on different attributes: a log with strong uncertainty on events and weak uncertainty on activities is a $\text{[E]}_\mathbb{S}\text{[A]}_\mathbb{W}$-type log.

In the next section, we will describe these different types of uncertainty in a mathematical framework that will, in turn, enable process mining analyses on uncertain event data.

\section{Preliminaries}\label{sec:definitions}
Let us introduce some preliminary definitions in order to describe uncertainty in process mining in a formal way. These definitions will provide the means to represent the behavior contained in uncertain data, and enable process mining tasks such as process discovery and conformance checking.

\subsection{Basic Definitions}
Firstly, we will define some basic mathematical structures.

\begin{definition}[Power Set]
	The \emph{power set} of a set $A$ is the set of all possible subsets of $A$, and is denoted with $\mathcal{P}(A)$. $\mathcal{P}_{NE}(A)$ denotes the set of all the non-empty subsets of $A$: $\mathcal{P}_{NE}(A) = \mathcal{P}(A)\setminus\{\emptyset\}$.
\end{definition}

\begin{definition}[Multiset]
	A \emph{multiset} is an extension of the concept of set that keeps track of the cardinality of each element. $\bag(A)$ is the set of all multisets over some set $A$. Multisets are denoted with square brackets, e.g. $b_1 = [~]$
	(the empty multiset), $b_2 = [a, a, b]$, $b_3 = [a, b, c]$, $b_4 = [a, b, c, a, a, b]$ are all multisets over $A = \{a, b, c\}$. In multiset the order of representation of the elements is irrelevant, and they can also be denoted with the cardinality of their elements, e.g. $b_4 = [a, b, c, a, a, b] = [a^3, b^2, c]$. We denote with $b(x)$ the cardinality of element $x \in A$ in $b$, e.g. $b_4(a) = 3$, $b_4(c) = 1$, and $b_4(d) = 0$.
	
	We can extend to multiset standard set operators such as membership (e.g. $a \in b_2$ and $c \notin b_2$), union (e.g. $b_2 \uplus b_3 = b_4$), difference (e.g. $b_4 \bmin b_3 = b_2$) and total cardinality (e.g. $|b_4| = 6$).
\end{definition}

\begin{definition}[Sequence, Subsequence and Permutation]
	Given a set $X$, a finite \emph{sequence} over $X$ of length $n$ is a function $s \in X^* : \{1, \dots, n\} \rightarrow X$, and it is written as $s = \langle s_1, s_2, \dots, s_n\rangle$. We denote with $\langle~\rangle$ the empty sequence, the sequence with no elements and of length 0. Over the sequence $s$ we define $|s| = n$, $s[i] = s_i$ and $x \in s \Leftrightarrow \exists_{1 \leq i \leq n} \ s = s_i$. The concatenation between two sequences is denoted with $\langle s_1, s_2, \dots, s_n\rangle \cdot \langle s'_1, s'_2, \dots, s'_m\rangle = \langle s_1, s_2, \dots, s_n, s'_1, s'_2, \dots, s'_m\rangle$. Given two sequences $s = \langle s_1, s_2, \dots, s_n\rangle$ and $s' = \langle s'_1, s'_2, \dots, s'_m\rangle$, $s'$ is a \emph{subsequence} of $s$ if and only if there exists a sequence of strictly increasing natural numbers $\langle i_1, i_2, \dots, i_m \rangle$ such that $\forall_{1 \leq j \leq m} \ s_{i_j} = s'_j$. We indicate this with $s' \subseteq s$. A \emph{permutation} of the set $X$ is a sequence $x_\mathcal{S}$ that contains all elements of $X$ without duplicates: $x_\mathcal{S} \in X$, $X \in x_\mathcal{S}$, and for all $1 \leq i \leq |x_\mathcal{S}|$ and for all $1 \leq j \leq |x_\mathcal{S}|$, $x_\mathcal{S}[i] = x_\mathcal{S}[j] \rightarrow i = j$. We denote with $\mathcal{S}_X$ all such permutations of set $X$.
\end{definition}

\begin{definition}[Sequence Projection]\label{def:sproj}
	Let $X$ be a set and $Q\subseteq X$ one of its subsets.~$\tproj_{Q} \colon X^* \rightarrow Q^*$ is the sequence projection function and is defined recursively: $\langle~\rangle \tproj_{Q} = \langle~\rangle$ and for $\sigma \in X^*$ and $x\in X$:
	$$(\langle x \rangle \cdot \sigma)\tproj_{Q} = \begin{cases}
	\sigma \tproj_{Q} & \mbox{if} \ x \not\in Q\\
	\langle x \rangle \cdot \sigma \tproj_{Q} & \mbox{if} \ x \in Q
	\end{cases}$$
\end{definition}

For example, $\langle y,z,y\rangle \tproj_{\{x,y\}} = \langle y,y\rangle$.

\begin{definition}[Applying Functions to Sequences]\label{def:funseq}
	Let $f \colon X \not\rightarrow Y$ be a partial function.
	$f$ can be applied to sequences of $X$ using the following recursive definition: $f(\langle~\rangle) = \langle~\rangle$ and for $\sigma \in X^*$ and $x\in X$:
	$$f(\langle x \rangle \cdot \sigma) = \begin{cases}
	f(\sigma) & \mbox{if} \ x \not\in dom(f)\\
	\langle f(x) \rangle \cdot f(\sigma) & \mbox{if} \ x \in dom(f)
	\end{cases}$$
\end{definition}

Next, so as to manage the possible different orders between events in a trace with uncertain timestamps, we introduce formalisms to denote strict partial orders.

\begin{definition}[Transitive Relation and Correct Evaluation Order]\label{def:tr_rel}
	Let $X$ be a set of objects and $R$ be a binary relation $R \subseteq X \times X$. $R$ is \emph{transitive} if and only if for all $x, x', x'' \in X$ we have that $(x, x') \in R \wedge (x', x'') \in R \Rightarrow (x, x'') \in R$. A \emph{correct evaluation order} is a permutation $s \in \mathcal{S}_X$ of the elements of the set $X$ such that for all $1 \leq i < j \leq |s|$ we have that $(s[j], s[i]) \not\in R$.
\end{definition}

\begin{definition}[Strict Partial Order]\label{def:st-par-ord}
	Let $S$ be a set of objects. Let $s, s' \in S$. A \emph{strict partial order} $\prec$ over $S$ is a binary relation that satisfies the following properties:
	\begin{itemize}
		\item Irreflexivity: $s \prec s$ is false.
		\item Transitivity: see Definition~\ref{def:tr_rel}.
		\item Antisymmetry: $s \prec s'$ implies that $s' \prec s$ is false. Implied by irreflexivity and transitivity\emph{~\cite{flavska2007transitive}}.
	\end{itemize}
\end{definition}

\begin{definition}[Directed Graph]
	A \emph{directed graph} $G$ is a tuple $(V, E)$ where $V$ is the set of vertices and $E \subseteq V \times V$ is the set of directed edges. The set $\mathcal{U}_G$ is the \emph{graph universe}. A \emph{path} in a directed graph $G = (V, E)$ is a sequence of vertices $p \in V$ such that for all $1<i<|p|-1$ we have that $(p_i, p_{i+1}) \in E$. We denote with $P_G$ the set of all such possible paths over the graph G. Given two vertices $v, v' \in V$, we denote with $p_G(v, v')$ the set of all paths beginning in $v$ and ending in $v'$: $p_G(v, v') = \{p \in P_G \mid p[1] = v \wedge p[|p|] = v'\}$. $v$ and $v'$ are \emph{connected} (and $v'$ is \emph{reachable} from $v$), denoted by $v \overset{G}{\mapsto} v'$, if and only if there exists a path between them in $G$: $p_G(v, v') \neq \emptyset$. Conversely, $v \overset{G}{\not\mapsto} v' \Leftrightarrow p_G(v, v') = \emptyset$. We omit the superscript $G$ if it is clear from the context. A directed graph $G$ is \emph{acyclic} if there exists no path $p \in P_G$ satisfying $p[1] = p[|p|]$.
\end{definition}

\begin{definition}[Topological Sorting]\label{def:topsort}
	Let $G = (V, E) \in \mathcal{U}_G$ be an acyclic directed graph. A \emph{topological sorting~\cite{kalvin1983generation}} $o_G \in \mathcal{S}_V$ is a permutation of the vertices of $G$ such that for all $1 \leq i < j \leq |o_G|$ we have that $o_G[j] \not\mapsto o_G[i]$. We denote with $\mathcal{O}_G \subseteq \mathcal{S}_V$ all such possible topological sortings over $G$.
\end{definition}

\begin{definition}[Transitive Reduction]
	A \emph{transitive reduction of a graph $G = (V, E) \in \mathcal{U}_G$~\cite{aho1972transitive}} is the function $\rho \colon \mathcal{U}_G \to \mathcal{U}_G$ such that for the graph $\rho(G) = (V, E_r)$ we have $E_r \subseteq E$ and every pair of vertices connected in $\rho(G)$ is not connected by any other path: for all $(v, v') \in E_r$, $p_G(v, v') = \{\langle v, v' \rangle\}$. $\rho(G)$ is the graph with the minimal number of edges that maintain the reachability between edges of $G$. The transitive reduction of a directed acyclic graph always exists and is unique\emph{~\cite{aho1972transitive}}.
\end{definition}

%

\subsection{Process Mining Definitions}

Let us now define the basic artifacts needed to perform process mining.

\begin{definition}[Universes]
	Let $\mathcal{U}_I$ be the set of all the \emph{event identifiers}. Let $\mathcal{U}_C$ be the set of all the \emph{case ID identifiers}. Let $\mathcal{U}_A$ be the set of all the \emph{activity identifiers}. Let $\mathcal{U}_T$ be the totally ordered set of all the \emph{timestamp identifiers}.
\end{definition}

\begin{definition}[Events and event logs]
	Let us denote with $\mathcal{E}_C = \mathcal{U}_I \times \mathcal{U}_C \times \mathcal{U}_A \times \mathcal{U}_T$ the universe of \emph{certain events}. A \emph{certain event log} is a set of events $L_C \subseteq \mathcal{E}_C$ such that every event identifier in $L_C$ is unique.
\end{definition}

\begin{definition}[Simple certain traces and logs]
	Let $\{ (e_1, c_1, a_1, t_1), (e_2, c_2, a_2, t_2), \dots, (e_n, c_n, a_n,\allowbreak t_n) \} \subseteq L_C$ be a set of certain events such that $c_1 = c_2 = \dots = c_n$ and $t_1 < t_2 < \dots < t_n$. A \emph{simple certain trace} is the sequence of activities $\langle a_1, a_2, \dots, a_n \rangle \in {\mathcal{U}_A}^*$ induced by such a set of events.  $\mathcal{T} = {\mathcal{U}_A}^*$ denotes the universe of certain traces. $L \in \bag(\mathcal{T})$ is a \emph{simple certain log}. We will drop the qualifier ``simple'' if it is clear from the context.
\end{definition}

As a preliminary application of process mining over uncertain event data, we will consider conformance checking. Starting from an event log and a process model, conformance checking verifies if the event data in the log conforms to the model, providing a diagnostic of the deviations. Conformance checking serves many purposes, such as checking if process instances follow a specific normative model, assessing if a certain execution log has been generated from a specific model, or verifying the quality of a process discovery technique.

The conformance checking algorithm that we are applying in this paper is based on \emph{alignments}. Introduced by Adriansyah~\cite{adriansyah2014aligning}, conformance checking through alignments finds deviations between a trace and a Petri net model of a process by creating a correspondence between the sequence of activities executed in the trace and the firing of the transitions in the Petri net. The following definitions are partially from~\cite{van2013decomposing}.

\begin{definition}[Petri Net]
	A Petri net is a tuple $N = (P,T,F)$ with $P$ the set of places, $T$ the set of transitions, $P \cap T = \emptyset$, and $F\subseteq (P \times T) \cup (T \times P)$ the flow relation. A Petri net $N = (P,T,F)$ defines a directed graph $(V, E)$ with vertices $V = P \cup T$ and edges $E = F$. A marking $M \in \bag(P)$ is a multiset of places.
\end{definition}

A marking defines the state of a Petri net, and indicates how many \emph{tokens} each place contains. For any $x \in P \cup T$, $\prenet{N}{x} = \{x'\mid (x',x)\in F\}$ denotes the set of input nodes and
$\postnet{N}{x} = \{x'\mid (x,x')\in F\}$ denotes the set of output nodes. We omit the superscript $N$ if it is clear from the context.

A transition $t \in T$ is \emph{enabled} in marking $M$ of net $N$, denoted as $(N,M)[t\rangle$, if each of its input places $\pre t$ contains at least one token. An enabled transition $t$ may \emph{fire}, i.e., one token is removed from each of the input places $\pre t$ and
one token is produced for each of the output places $\post t$.
Formally: $M' = (M\bmin \pre t)\bplus \post t$ is the marking resulting from firing enabled transition $t$ in marking $M$ of Petri net $N$.
$(N,M)[t\rangle (N,M')$ denotes that $t$ is enabled in $M$ and firing $t$ results in marking $M'$.

Let $\sigma_T = \langle t_1,t_2, \ldots, t_n \rangle \in T^*$ be a sequence of transitions.
$(N,M)[\sigma_T\rangle (N,M')$ denotes that there is a set of markings $M_0, M_1, \ldots, M_n$
such that $M_0 = M$, $M_n = M'$, and $(N,M_i)[t_{i+1}\rangle (N,M_{i+1})$ for $0 \leq i < n$.
A marking $M'$ is \emph{reachable} from $M$ if there exists a $\sigma_T$ such that $(N,M)[\sigma_T\rangle (N,M')$.

\begin{definition}[Labeled Petri Net]
	A labeled Petri net $N = (P,T,F,l)$ is a Petri net $(P,T,F)$ with labeling function $l \colon T \not\to {\cal U}_{A}$ where ${\cal U}_{A}$ is some universe of activity labels.
	Let $\sigma = \langle a_1,a_2, \ldots, a_n \rangle \in {{\cal U}_{A}}^*$ be a sequence of activities.
	$(N,M) [{\sigma} \rhd (N,M')$ if and only if there is a sequence $\sigma_T \in T^*$ such that
	$(N,M) [\sigma_T\rangle (N,M')$ and $l(\sigma_T)=\sigma$.
\end{definition}

If $t \notin \mi{dom}(l)$, it is called \emph{invisible}. To indicate invisible transitions, we use the placeholder symbol $\tau \notin {\cal U}_{A}$; for any invisible transition $t$ we define $l(t) = \tau$. An occurrence of visible transition $t \in \mi{dom}(l)$ corresponds to observable activity $l(t)$.

\begin{definition}[System Net]
	A system net is a triplet $\mi{SN} = (N,M_{\mi{init}},M_{\mi{final}})$ where
	$N = (P,T,F,l)$ is a labeled Petri net,
	$M_{\mi{init}} \in \bag(P)$ is the initial marking, and $M_{\mi{final}} \in \bag(P)$ is the final marking.
	${\cal U}_{\mi{SN}}$ is the \emph{universe of system nets}.
	Over a system net we define the following:
	\begin{itemize}
		\item $T_v(\mi{SN}) = \mi{dom}(l)$ is the set of \emph{visible transitions} in $\mi{SN}$,
		\item $A_v(\mi{SN}) = \mi{rng}(l)$ is the set of corresponding \emph{observable activities} in $\mi{SN}$,
		\item $T_v^u(\mi{SN}) = \{t \in T_v(\mi{SN}) \mid
		\forall_{t' \in T_v(\mi{SN})} \ l(t) = l(t') \Rightarrow  t = t' \}$ is the set of \emph{unique} visible transitions in $\mi{SN}$ (i.e., there are no other transitions having the same visible label),
		\item $A_v^u(\mi{SN}) = \{l(t) \mid t \in T_v^u(\mi{SN})\}$ is the set of corresponding \emph{unique} observable activities in $\mi{SN}$,
		\item $\phi(\mi{SN}) = \{\sigma \mid (N,M_{\mi{init}}) [{\sigma} \rhd (N,M_{\mi{final}})\}$ is the set of \emph{visible} traces starting in $M_{\mi{init}}$ and ending in $M_{\mi{final}}$, and
		\item $\phi_f(\mi{SN}) = \{\sigma_T \mid (N,M_{\mi{init}}) [{\sigma_T}\rangle (N,M_{\mi{final}})\}$
		is the corresponding set of complete firing sequences.
	\end{itemize}
\end{definition}

Figure~\ref{fig:esconformance} shows a system net with initial and final markings $M_{\mi{init}} = [start]$ and $M_{\mi{final}} = [end]$. Given a system net, $\phi(\mi{SN})$ is the set of all possible \emph{visible} activity sequences, i.e., the labels of complete firing sequences starting in $M_{\mi{init}}$ and ending in $M_{\mi{final}}$ projected onto the set of observable activities. Given the set of activity sequences $\phi(\mi{SN})$ obtainable via complete firing sequences on a certain system net, we can define a perfectly fitting event log as a set of traces which activity projection is contained in $\phi(\mi{SN})$.

\subsection{Conformance Checking Definitions}

The task of conformance checking consist in comparing an event log and a model, in order to assess the deviations of event data with respect to the expected behavior of the process. This is usually done to verify if the process conforms to a \emph{de iure} model designed by process experts, which describes how the process should ideally run. We will now describe a conformance checking technique, in order to extend it to the uncertain setting.

\begin{definition}[Perfectly Fitting Log]\label{def:perffitlog}
	Let $L \in \bag(\mathcal{T})$ be a certain event log and let $\mi{SN}=(N,M_{\mi{init}},M_{\mi{final}})\in {\cal U}_{\mi{SN}}$ be a system net.
	$L$ is perfectly fitting $\mi{SN}$ if and only if $\{\sigma \in L\} \subseteq \phi(\mi{SN})$.
\end{definition}

The definitions described so far allow us to build \emph{alignments} in order to compute the fitness of trace on a certain model. An alignment is a correspondence between a sequence of activities (extracted from the trace) and a sequence of transitions with the relative labels (fired in the model while replaying the trace). The first sequence indicates the ``moves in the log'' and the second indicates the ``moves in the model''. If a move in the model cannot be mimicked by a move in the log, then a ``$\nomove$'' (``no move'') appears in the top row; conversely, if a move in the log cannot be mimicked by a move in the model, then a ``$\nomove$'' (``no move'') appears in the bottom row.``no moves'' not corresponding to invisible transitions point to
deviations between the model and the log. A \emph{move} is a pair $(x,(y,t))$ where the first element refers to the log and the second element to the model. A ``$\nomove$'' in the first element of the pair indicates a move on the model, while a ``$\nomove$'' in the second element indicates a move on the log.

\begin{definition}[Legal Moves]
	Let $L \in \bag(\mathcal{T})$ be a certain event log, let $A \subseteq \mathcal{U}_A$ be the set of activity labels appearing in the event log, and let $\mi{SN} = (N, M_{\mi{init}},\allowbreak M_{\mi{final}}) \in {\cal U}_{\mi{SN}}$ be a system net with $N=(P,T,F,l)$.
	$A_{LM} =
	\{ (x,(x,t)) \mid x \in A \ \wedge \ t \in T \ \wedge \ l(t) = x \} \cup
	\{ (\nomove,(x,t)) \mid t \in T \ \wedge \ l(t) = x \} \cup
	\{ (x,\nomove) \mid x \in A \}$ is the set of \emph{legal moves}.
\end{definition}

An alignment is a sequence of legal moves such that after removing all ``$\nomove$'' symbols, the top row corresponds to a trace in the log and the bottom row corresponds to a firing sequence starting in $M_{\mi{init}}$ and ending in $M_{\mi{final}}$. Notice that if $t \notin \mi{dom}(l)$ is an invisible transition, the activation of $t$ is indicated by a ``$\nomove$'' on the log in correspondence of $t$ and the placeholder label $\tau$.
Hence, the middle row corresponds to a visible path when ignoring the $\tau$ steps. Figure~\ref{fig:esconformance} shows a system net with two examples of alignments, $\sigma_1$ of a fitting trace and $\sigma_2$ of a non-fitting trace.

\begin{figure}
	\centering
	\includegraphics[width=.9\linewidth]{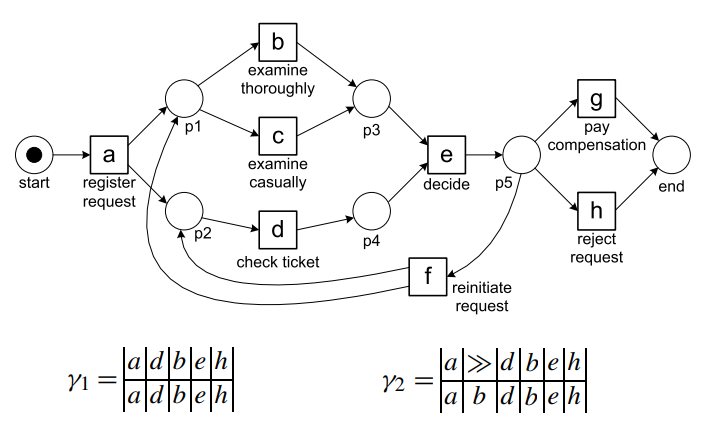}
	\caption{Example of alignments on a system net. The alignment $\gamma_1$ shows that the trace $\langle a,d,b,e,h \rangle$ is perfectly fitting the net. The alignment $\gamma_2$ shows that the trace $\langle a,b,d,b,e,h \rangle$ is misaligned with the net in one point, indicated by ``$\nomove$''. Partially from~\cite{van2016process}.}
	\label{fig:esconformance}
\end{figure}

\begin{definition}[Alignment]\label{def:alignment}
	Let $\sigma \in L$ be a certain trace and $\sigma_T \in \phi_f(\mi{SN})$ a complete firing sequence of system net $\mi{SN}$.
	An \emph{alignment} of $\sigma$ and $\sigma_T$ is a sequence $\gamma \in {A_{LM}}^*$
	such that the projection on the first element (ignoring ``$\nomove$'') yields $\sigma$
	and the projection on the last element (ignoring ``$\nomove$'' and transition labels) yields $\sigma_T$.
\end{definition}

A trace and a model can have several possible alignments. In order to select the most appropriate one, we introduce a function that associates a \emph{cost} to undesired moves - the ones associated with deviations.

\begin{definition}[Cost of Alignment]\label{def:alcosts}
	Cost function $\delta \colon {A_{LM}} \rightarrow \Nat$ assigns costs to legal moves.
	The \emph{cost} of an alignment $\gamma \in {A_{LM}}^*$ is the sum of all costs:
	$\delta(\gamma) = \sum_{(x,y)\in \gamma} \delta(x,y)$.
\end{definition}

Moves where log and model agree have no costs, i.e.,
$\delta(x,(x,t)) = 0$ for all $x\in A$.
Moves on model only have no costs if the transition is invisible, i.e.,
$\delta(\nomove,(\tau,t))=0$ if $l(t) = \tau$.
$\delta(\nomove,(x,t))>0$ is the cost when the model makes an ``$x$ move''
without a corresponding move of the log (assuming $l(t) = x \neq \tau$).
$\delta(x,\nomove) > 0$ is the cost for an ``$x$ move'' only on the log.
In this paper, we often use a standard cost function $\delta_S$
that assigns unit costs:
$\delta_S(x,(x,t)) = 0$,
$\delta_S(\nomove,(\tau,t)) = 0$,
and $\delta_S(\nomove,(x,t)) = \delta_S(x,\nomove) = 1$ for all $x\in A$.

\begin{definition}[Optimal Alignment]\label{def:optalign}
	Let $L \in \bag(\mathcal{T})$ be a certain event log and let $\mi{SN}\in {\cal U}_{\mi{SN}}$ be a system net with $\phi(\mi{SN}) \neq \emptyset$.
	\begin{itemize}
		\item For $\sigma \in L$, we define:
		$\Gamma_{\sigma,\mi{SN}} = \{ \gamma \in  {A_{LM}}^* \mid \exists_{\sigma_T \in \phi_f(\mi{SN})} \  \gamma \ \mathit{is} \ \mathit{an} \ \allowbreak \mathit{alignment} \ \mathit{of} \allowbreak \ \sigma \ \mathit{and} \ \sigma_T \}$.
		
		\item An alignment $\gamma \in \Gamma_{\sigma,SN}$ is \emph{optimal} for trace $\sigma \in L$ and system net $\mi{SN}$
		if for any $\gamma' \in  \Gamma_{\sigma,SN}$: $\delta(\gamma') \geq \delta(\gamma)$.
		
		\item $\lambda_{\mi{SN}} \colon \mathcal{T} \rightarrow {A_{LM}}^* $ is
		a deterministic mapping that assigns any trace $\sigma$ to an optimal alignment, i.e., $\lambda_{\mi{SN}}(\sigma) \in \Gamma_{\sigma,\mi{SN}}$ and $\lambda_{\mi{SN}}(\sigma)$ is optimal.
		
		\item $\mi{costs}(L,\mi{SN},\delta) = \sum_{\sigma \in L} \delta(\lambda_{\mi{SN}}(\sigma))$ are the \emph{misalignment costs} of the whole event log.
	\end{itemize}
	$\sigma \in L$ is a (perfectly) fitting trace for the system net $\mi{SN}$ if and only if $\delta(\lambda_{\mi{SN}}(\sigma)) = 0$. $L$ is a (perfectly) fitting event log for the system net $\mi{SN}$ if and only if $\mi{costs}(L,\mi{SN},\delta) = 0$.
\end{definition}

The technique to compute the optimal alignment~\cite{adriansyah2014aligning} is as follows. Firstly, it creates an \emph{event net}, a sequence-structured system net able to replay only the trace to align. The transitions in the event net have labels corresponding to the activities in the trace. Then, a \emph{product net} should be computed. A product net is the union of the event net and the model together, with synchronous transitions added. These additional transitions are paired with transitions in the event net and in the process model that have the same label. Then, they are connected with arcs from the input places and to the output places of those transitions. The product net is able to represent moves on log, moves on model and synchronous moves by means of firing transitions. In fact, the transitions of the event net correspond to moves on log, the transitions of the process model correspond to moves on model, the added synchronous transitions correspond to synchronous moves. The union of the initial and final markings of the event net and the process model constitute respectively the initial and final marking of the product net, while every complete firing sequence on the product net corresponds to a possible alignment. Lastly, the product net is translated to a state space, and a state space exploration via the $\mathbb{A}^*$ algorithm is performed in order to find the complete firing sequence that yields the lowest cost.

Let us define formally the construction of the event net and the product net:

\begin{definition}[Event Net]
	Let $\sigma \in \mathcal{T}$ be a certain trace. The \emph{event net} $\mi{en}: \mathcal{T} \to \mathcal{U}_{\mi{SN}}$ of $\sigma$ is a system net $\text{en}(\sigma) = (P, T, F, l, M_{init}, M_{final})$ such that:
	\begin{itemize}
		\item $P = \{p_i \mid 1 \leq i \leq |\sigma|+1 \}$,
		\item $T = \{t_i \mid 1 \leq i \leq |\sigma| \}$,
		\item $F = \bigcup_{1 \leq i \leq |\sigma|}\{(p_i, t_i), (t_i, p_{i + 1})\}$
		\item $l \colon T \to \mathcal{U}_A$ such that for all $1 \leq i \leq |\sigma|$, $l(t_i) = \sigma[i]$,
		\item $M_{init} = [p_1]$,
		\item $M_{final} = [p_{|P|}]$.
	\end{itemize}
\end{definition}

Note that the labeling function $l$ of an event net is a total function: no invisible transitions are contained in an event net, since for each event we generate a transition labeled with the corresponding activity label.

\begin{definition}[Product of two Petri Nets~\cite{winskel1987petri}]
	Let $S_1 = (P_1, T_1, F_1, l_1,\allowbreak M_{init_1}, M_{final_1})$ and $S_2 = (P_2, T_2, F_2, l_2, M_{init_2}, M_{final_2})$ be two system nets. The \emph{product net} of $S_1$ and $S_2$ is the system net $S = S_1 \otimes S_2 = (P, T, F, l, M_{init},\allowbreak M_{final})$ such that:
	\begin{itemize}
		\item $P = P_1 \cup P_2$,
		\item $T \subseteq (T_1 \cup \{\nomove\} \times T_2 \cup \{\nomove\})$ such that $T = \{(t_1, \nomove) \mid t_1 \in T_1 \} \cup \{(\nomove, t_2) \mid t_2 \in T_2 \} \cup \{(t_1, t_2) \in (T_1 \times T_2) \mid l_1(t_1) = l_2(t_2) \neq \tau \}$,
		\item $F \subseteq (P \times T) \cup (T \times P)$ such that
		\begin{align*}
			F &= \begin{aligned}[t]
			&\{(p_1, (t_1, \nomove)) \mid p_1 \in P_1 \wedge t_1 \in T_1 \wedge (p_1, t_1) \in F_1 \} \cup
			\\
			&\{((t_1, \nomove), p_1) \mid t_1 \in T_1 \wedge p_1 \in P_1 \wedge (t_1, p_1) \in F_1 \} \cup \\ 
			&\{(p_2, (t_2, \nomove)) \mid p_2 \in P_2 \wedge t_2 \in T_2 \wedge (p_2, t_2) \in F_2 \} \cup \\
			&\{((t_2, \nomove), p_2) \mid t_2 \in T_2 \wedge p_2 \in P_2 \wedge (t_2, p_2) \in F_2 \} \cup \\
			&\{(p_1, (t_1, t_2)) \mid p_1 \in P_1 \wedge (t_1, t_2) \in T \cap (T_1 \times T_2) \wedge (p_1, t_1) \in F_1 \} \cup \\
			&\{(p_2, (t_1, t_2)) \mid p_2 \in P_2 \wedge (t_1, t_2) \in T \cap (T_1 \times T_2) \wedge (p_2, t_2) \in F_2 \} \cup \\
			&\{((t_1, t_2), p_1) \mid p_1 \in P_1 \wedge (t_1, t_2) \in T \cap (T_1 \times T_2) \wedge (t_1, p_1) \in F_1 \} \cup \\
			&\{((t_1, t_2), p_2) \mid p_2 \in P_2 \wedge (t_1, t_2) \in T \cap (T_1 \times T_2) \wedge (t_2, p_2) \in F_2 \}\end{aligned}
		\end{align*}
		\item $l \colon T \to \mathcal{U}_A$ such that for all $(t_1, t_2) \in T$, $l((t_1, t_2)) = l_1(t_1)$ if $t_2 = \nomove$, $l((t1, t2)) = l_2(t_2)$ if $t_1 = \nomove$, and $l((t_1, t_2)) = l_1(t_1)$ otherwise,
		\item $M_{init} = M_{init_1} \uplus M_{init_2}$,
		\item $M_{final} = M_{final_1} \uplus M_{final_2}$.
	\end{itemize}
\end{definition}

\section{Uncertainty in Process Mining}\label{sec:unc}

In this section, we will extend the definitions of event, trace, and event log to the uncertain case. Let us first define the identifiers necessary to express event indeterminacy.

\begin{definition}[Determinate and indeterminate event qualifiers]
	Let $\mathcal{U}_O = \{!, ?\}$, where the ``!'' symbol denotes \emph{determinate events}, and the ``?'' symbol denotes \emph{indeterminate events}.
\end{definition}

For strong uncertainty, attribute values are replaced by a set of possible values. In the case of weak uncertainty, a continuous function $f$ provides the probability density for the combinations of attribute values in the uncertain event. Notice that the total mass of probabilities described by $f$ might be lower than 1: this is so we can aptly represent the case of an indeterminate event.

\begin{definition}[Uncertain events]\label{def:events}
	Let $\mathcal{E}_S = \mathcal{U}_I \times \mathcal{P}_{NE}(\mathcal{U}_C) \times \mathcal{P}_{NE}(\mathcal{U}_A) \times \mathcal{P}_{NE}(\mathcal{U}_T) \times \mathcal{U}_O$ denote the universe of \emph{strongly uncertain events}. $\mathcal{E}_W =\{(e_i, f) \in \mathcal{U}_I \times ((\mathcal{U}_C \times \mathcal{U}_A \times \mathcal{U}_T) \not\to [0,1]) \mid \sum_{(c, a, t) \in dom(f)}f(c, a, t) \leq 1\}$ is the universe of \emph{weakly uncertain events}\footnote{We assume here that $dom(f)$ is finite. It is easy to generalize to the infinite case by employing an integral.}.
\end{definition}

The probability of a weakly uncertain event of having been recorded but not happening in reality is equal to $1 - \sum_{(c, a, t) \in dom(f)}f(c, a, t)$.

Now that the definitions of strongly and weakly uncertain events are given, let us aggregate them in uncertain event logs.

\begin{definition}[Uncertain event logs]
	A \emph{strongly uncertain event log} is a set of events $L_S \subseteq \mathcal{E}_S$ such that every event identifier in $L_S$ is unique. A \emph{weakly uncertain event log} is a set of events $L_W \subseteq \mathcal{E}_W$ such that every event identifier in $L_W$ is unique.
	
	For a strongly uncertain event $e = (e_i, c_s, a_s, t_s, o) \in L_S$ we define the following projection functions: $\pi^{L_S}_c(e) = c_s \in \mathcal{P}_{NE}(\mathcal{U}_C)$, $\pi^{L_S}_a(e) = a_s \in \mathcal{P}_{NE}(\mathcal{U}_A)$, $\pi^{L_S}_t(e) = t_s \in  \mathcal{P}_{NE}(\mathcal{U}_T)$ and $\pi^{L_S}_o(e) = o \in \mathcal{U}_O$.
\end{definition}

A weakly uncertain event log $L_W \subseteq \mathcal{E}_W$ has a corresponding strongly uncertain event log $\overline{L_W} = L_S \subseteq \mathcal{E}_S$ such that
\begin{align*}
	L_S &= \begin{aligned}[t]&
	\{(e_i, c_s, a_s, t_s, o) \in \mathcal{E}_S \mid \exists_{({e_i}', f) \in L_W} e_i = {e_i}' \wedge \\ &
	c_s = \{c \in \mathcal{U}_C \mid \exists_{a, t} \ (c, a, t) \in dom(f) \wedge f(c, a, t) > 0\} \wedge \\ &
	a_s = \{a \in \mathcal{U}_A \mid \exists_{c, t} \ (c, a, t) \in dom(f) \wedge f(c, a, t) > 0\} \wedge \\ &
	t_s = \{t \in \mathcal{U}_T \mid \exists_{c, a} \ (c, a, t) \in dom(f) \wedge f(c, a, t) > 0\} \wedge \\ &
	(o = \:! \Leftrightarrow \sum_{(c, a, t) \in dom(f)}f(c, a, t) = 1) \wedge \\ &
	(o = \:? \Leftrightarrow \sum_{(c, a, t) \in dom(f)}f(c, a, t) < 1) \}.\end{aligned}
\end{align*}

Notice that representing the density of probability for combinations of values of case ID, time and activity with a single function $f$ is an approximation that assumes probabilistic independence between event attributes.

\begin{definition}[Realization of an event log]
	$L_C \subseteq \mathcal{E}_C$ is a \emph{realization} of $L_S \subseteq \mathcal{E}_S$ if and only if:
	\begin{itemize}
		\item For all $(e_i, c, a, t) \in L_C$ there is a distinct $({e_i}', c_s, a_s, t_s, o) \in L_S$ such that $e_i = {e_i}'$, $c \in c_s$, $a \in a_s$ and $t \in t_s$;
		\item For all $(e_i, c_s, a_s, t_s, o) \in L_S$ with $o = \:!$ there is a distinct $({e_i}', c, a, t) \in L_C$ such that $e_i = {e_i}'$, $c \in c_s$, $a \in a_s$ and $t \in t_s$.
	\end{itemize}
	$\mathcal{R}_L(L_S)$ is the set of all such realizations of the log $L_S$.
\end{definition}

Note that these definitions allow us to transform a weakly uncertain log into a strongly uncertain one, and a strongly uncertain one in a set of certain logs.

In this paper, we focus on three types of uncertainty:
\begin{itemize}
	\item Strong uncertainty on the activity;
	\item Strong uncertainty on the timestamp;
	\item Strong uncertainty on indeterminate events.
\end{itemize}
All three can happen concurrently. Following the taxonomy presented in Section~\ref{sec:taxonomy}, this setting corresponds to a $\text{[E, A, T]}_\mathbb{S}$-type log.
It is worth noting that the specific case of uncertainty on the case ID causes a problem; since an event can have many possible case IDs, it can belong to different traces. In data format where the events are already aggregated into traces, such as the very common XES standard, this means that the information related to a trace can be \emph{non-local} to the trace itself, but can be stored in some other points of the log. We will focus on the problem of uncertainty on the case ID attribute in future work.

Firstly, we will lay down some simplified notation in order to model the problem at hand in a more compact way.

\begin{definition}[Simple uncertain events, traces and logs]\label{def:unc}	
	Let $e_i \in \mathcal{U}_I$, $a_s \in \mathcal{P}_{NE}(\mathcal{U}_A)$, $t_{min} \in \mathcal{U}_T$, $t_{max} \in \mathcal{U}_T$ and $o \in \mathcal{U}_O$ such that $t_{min} < t_{max}$. $e_U^S = (e_i, a_s, t_{min}, t_{max}, o)$ is a \emph{simple uncertain event}. Let us denote with $\mathcal{E}_U^S \subseteq \mathcal{U}_I \times \mathcal{P}_{NE}(\mathcal{U}_A) \times \mathcal{U}_T \times \mathcal{U}_T \times \mathcal{U}_O$ the universe of all simple uncertain events. $\sigma_U \subseteq \mathcal{E}_U^S$ is a \emph{simple uncertain trace} if all the event identifiers in $\sigma_U$ are unique. $\mathcal{T}_U$ denotes the universe of simple uncertain traces.  $L_U \in \mathcal{P}(\mathcal{T}_U)$ is a \emph{simple uncertain log} if all the event identifiers in $L_U$ are unique. For $\sigma_U \in L_U$ and $e_U^S = (e_i, a_s, t_{min}, t_{max}, o) \in \sigma_U$ we define the following projection functions: $\pi^{L_U}_a(e_U^S) = a_s \in \mathcal{P}_{NE}(\mathcal{U}_A)$, $\pi^{L_U}_{t_{min}}(e_U^S) = t_{min} \in \mathcal{U}_T$, $\pi^{L_U}_{t_{max}}(e_U^S) = t_{max} \in \mathcal{U}_T$ and $\pi^{L_U}_o(e_U^S) = o \in \mathcal{U}_O$.
\end{definition}

In a simple uncertain event $e_U^S = (e_i, a_s, t_{min}, t_{max}, o)$, the true activity label of the event is one of the labels contained in the set $a_s$, the true timestamp is one of the values contained in the closed interval $[t_{min}, t_{max}]$, while the indeterminacy symbol $o$ indicates whether the event has certainly occurred, or if it is possible that it did not occur even though it has been recorded in an event log.

Simple uncertain events are best illustrated with a running example. Let us consider the following process instance, a simplified version of anomalies that are actually occurring in processes of the healthcare domain. An elderly patient enrolls in a clinical trial for an experimental treatment against myeloproliferative neoplasms, a class of blood cancers. The enrollment in this trial includes a lab exam and a visit with a specialist; then, the treatment can begin. The lab exam, performed on the 8th of July, finds a low level of platelets in the blood of the patient, a condition known as thrombocytopenia (TP). At the visit, on the 10th of May, the patient self-reports an episode of night sweats on the night of the 5th of July, prior the lab exam: the medic notes this, but also hypothesized that it might not be a symptom, since it can be caused not by the condition but by external factors (such as very warm weather). The medic also reads the medical records of the patient and sees that, shortly prior to the lab exam, the patient was undergoing a heparine treatment (a blood-thinning medication) to prevent blood clots. The thrombocytopenia found with the lab exam can then be primary (caused by the blood cancer) or secondary (caused by other factors, such as a drug). Finally, the medic finds an enlargement of the spleen in the patient (splenomegaly). It is unclear when this condition has developed: it might have appeared at any moment prior to that point. The medic decides to admit the patient in the clinical trial, starting 12th of July. These events are collected and recorded in the trace shown in Table~\ref{table:uncertaintrace} in the information system of the hospital.  For readability, the timestamp field only indicates the day of the month. This trace includes all types of uncertainty contained in a $\text{[E, A, T]}_\mathbb{S}$-type log, the setting we are considering for the application of conformance checking.

\begin{table}[]
	\caption{The uncertain trace of an instance of healthcare process used as a running example. For sake of clarity, we have further simplified the notation in the timestamps column, by showing only the day of the month.}
	\label{table:uncertaintrace}
	\centering
	\begin{tabular}{ccccc}
		\textbf{Case ID}        & \textbf{Event ID} & \textbf{Timestamp}                                                                                                     & \textbf{Activity}             & \multicolumn{1}{l}{\textbf{Indet. event}} \\ \hline
		\multicolumn{1}{|c|}{ID192} & \multicolumn{1}{c|}{$e_1$} 
		& \multicolumn{1}{c|}{5}                                                                         & \multicolumn{1}{c|}{\emph{NightSweats}}        & \multicolumn{1}{c|}{?}                    \\ \hline
		\multicolumn{1}{|c|}{ID192}& \multicolumn{1}{c|}{$e_2$} & \multicolumn{1}{c|}{8}                                                                         & \multicolumn{1}{c|}{\{\emph{PrTP}, \emph{SecTP}\}} & \multicolumn{1}{c|}{!}                    \\ \hline
		\multicolumn{1}{|c|}{ID192}& \multicolumn{1}{c|}{$e_3$} & \multicolumn{1}{c|}{[4, 10]}                                                                         & \multicolumn{1}{c|}{\emph{Splenomeg}} & \multicolumn{1}{c|}{!}                    \\ \hline
		\multicolumn{1}{|c|}{ID192}& \multicolumn{1}{c|}{$e_4$} & \multicolumn{1}{c|}{12}                                                                         & \multicolumn{1}{c|}{\emph{Adm}}        & \multicolumn{1}{c|}{!}                    \\ \hline
	\end{tabular}
\end{table}

In the notation of Definition~\ref{def:unc}, the trace
$\sigma_U$ in Table~\ref{table:uncertaintrace} is denoted as:
\begin{align*}
\sigma_U = \{(e_1, \{\textit{NightSweats}\}, 5, 5, ?), (e_2, \{\textit{PrTP}, \textit{SecTP}\}, 8, 8, !),\\ (e_3, \{\textit{Splenomeg}\}, 4, 10, !), (e_4, \{\textit{Adm}\}, 12, 12, !)\}.
\end{align*}

We can also draw the time diagram of this example of uncertain trace, which can be seen in Figure~\ref{fig:gantt_running}.

\begin{figure}[h!]
	\centering
	\includegraphics[width=.9\columnwidth]{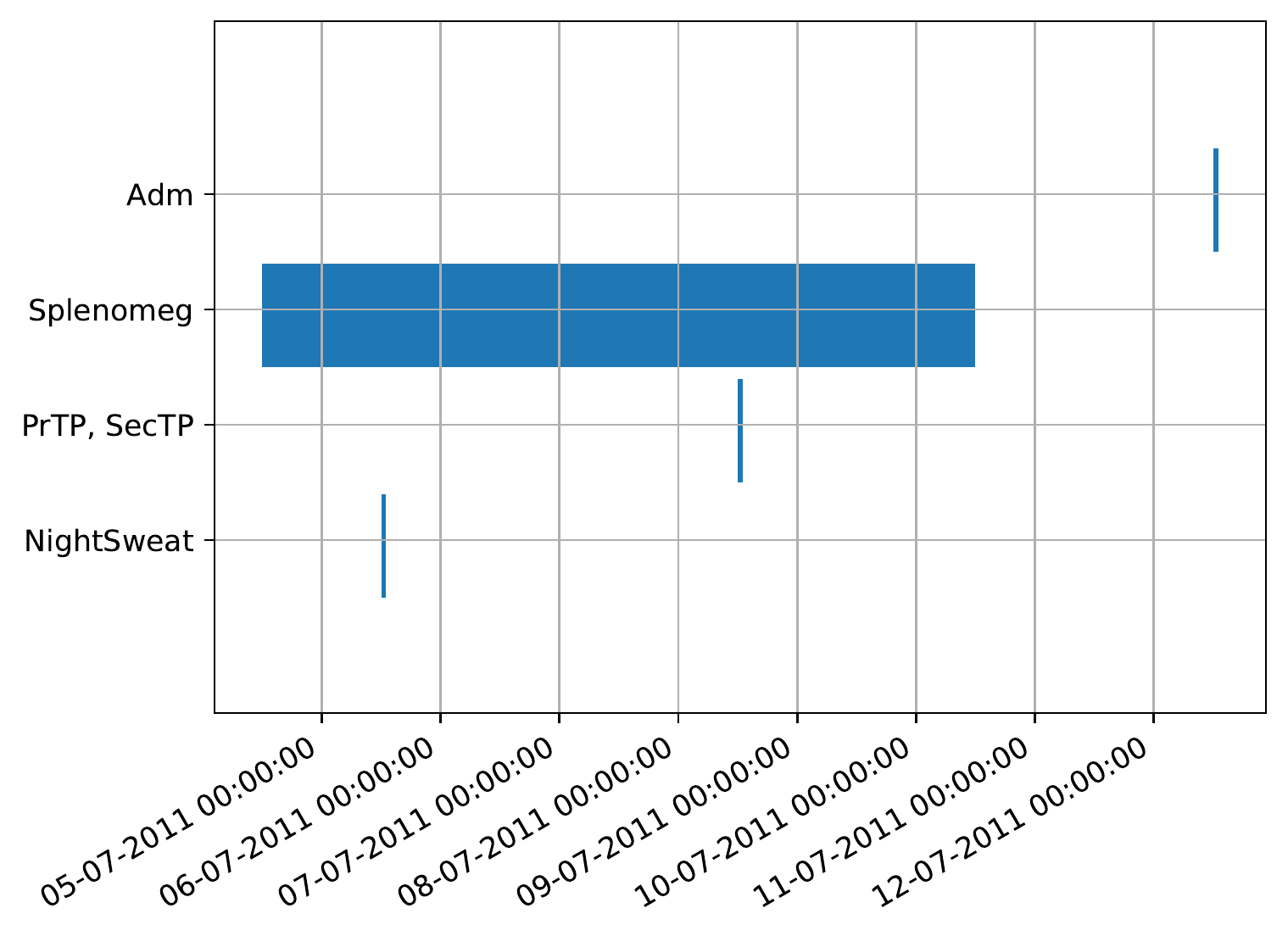}
	\caption{Time diagram of the trace in Table~\ref{table:uncertaintrace}.}
	\label{fig:gantt_running}
\end{figure}

In the remainder of the paper, when defining simple uncertain traces and events, we always assume that these belong to a corresponding simple uncertain log. Thus, for simplicity, we will omit the qualifier ``$L_U$'' when denoting the corresponding projection functions.

These simplified traces and logs can be related to the more general framework described in the previous section through the following transformation: let $L_S \subseteq \mathcal{E}_S$ be a strongly uncertain log and let $g \colon \mathcal{U}_I \not\to \mathcal{U}_C$ be a function mapping event identifiers onto cases such that $dom(g) = \{e_i \mid (e_i, c_s, a_s, t_s, u) \in L_S\}$ and for all $(e_i, c_s, a_s, t_s, u) \in L_S$, $g(e_i) \in c_s$. Thus, for $c \in rng(g)$, $g^{-1}(c) = \{e_i \in \mathcal{U}_I \mid g(e_i) = c\}$. The simple uncertain event log defined by $g$ on $L_S$ is given as $L_U = \{ \{(e_i, \pi^{L_S}_a(e), min(\pi^{L_S}_t(e)), max(\pi^{L_S}_t(e)), \pi^{L_S}_o(e)) \mid e_i \in g^{-1}(c) \wedge \pi^{L_S}_i(e) = e_i\} \mid c \in rng(g) \}$.

In order to more easily work with timestamps in simple uncertain events, let us frame their time relationship as a strict partial order.

\begin{definition}[Strict partial order over simple uncertain events]\label{def:ord}
	Let $e, e' \in \mathcal{E}_U^S$ be two simple uncertain events. $\prec_\mathcal{E}$ is a strict partial order defined on the universe of strongly uncertain events $\mathcal{E}_U^S$ as:
	\[
	e \prec_\mathcal{E} e' \Leftrightarrow \pi_{t_{max}}(e) < \pi_{t_{min}}(e')
	\]
\end{definition}

\begin{proposition}[$\prec_\mathcal{E}$ is a strict partial order]\label{th:ord}
\end{proposition}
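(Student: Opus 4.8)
The plan is to verify that $\prec_\mathcal{E}$, as defined by $e \prec_\mathcal{E} e' \Leftrightarrow \pi_{t_{max}}(e) < \pi_{t_{min}}(e')$, satisfies the three properties enumerated in Definition~\ref{def:st-par-ord}: irreflexivity, transitivity, and antisymmetry. Since Definition~\ref{def:st-par-ord} explicitly notes that antisymmetry follows from irreflexivity and transitivity, I would only need to establish the first two directly, citing~\cite{flavska2007transitive} for the third. The entire argument reduces to lifting the corresponding properties of the strict order $<$ on the totally ordered timestamp universe $\mathcal{U}_T$, together with the defining constraint $t_{min} < t_{max}$ from Definition~\ref{def:unc}.

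First I would prove irreflexivity. For any simple uncertain event $e = (e_i, a_s, t_{min}, t_{max}, o)$, suppose toward a contradiction that $e \prec_\mathcal{E} e$. By definition this means $\pi_{t_{max}}(e) < \pi_{t_{min}}(e)$, i.e. $t_{max} < t_{min}$. But Definition~\ref{def:unc} requires $t_{min} < t_{max}$ for every simple uncertain event. Combining $t_{min} < t_{max}$ and $t_{max} < t_{min}$ and using transitivity of $<$ on $\mathcal{U}_T$ yields $t_{min} < t_{min}$, contradicting the irreflexivity of $<$. Hence $e \prec_\mathcal{E} e$ is false for all $e$.

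Next I would prove transitivity. Take three simple uncertain events $e, e', e''$ with $e \prec_\mathcal{E} e'$ and $e' \prec_\mathcal{E} e''$. Unfolding the definitions gives $\pi_{t_{max}}(e) < \pi_{t_{min}}(e')$ and $\pi_{t_{max}}(e') < \pi_{t_{min}}(e'')$. The key observation is that the intermediate event $e'$ also obeys the event constraint $\pi_{t_{min}}(e') < \pi_{t_{max}}(e')$ (strictly; it would even suffice to have $\le$ here). Chaining the three inequalities through the total order on $\mathcal{U}_T$,
\[
\pi_{t_{max}}(e) < \pi_{t_{min}}(e') < \pi_{t_{max}}(e') < \pi_{t_{min}}(e''),
\]
and applying transitivity of $<$ yields $\pi_{t_{max}}(e) < \pi_{t_{min}}(e'')$, which is exactly $e \prec_\mathcal{E} e''$. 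Finally, antisymmetry is immediate from Definition~\ref{def:st-par-ord} as a consequence of the two properties already established, so no separate argument is required.

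The argument is essentially routine, so there is no genuinely hard obstacle; the only subtlety worth flagging is that irreflexivity and transitivity both genuinely depend on the constraint $t_{min} < t_{max}$ built into the definition of a simple uncertain event. Had the definition instead permitted $t_{min} = t_{max}$ (a degenerate, pointlike interval), irreflexivity would still hold, but one must be careful that the bridging step in the transitivity chain uses the correct inequality between the $t_{min}$ and $t_{max}$ of the \emph{middle} event $e'$; I would make sure to invoke that constraint explicitly rather than leave it implicit, since it is precisely what glues the two hypotheses together.
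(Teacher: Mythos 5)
Your proof is correct and follows essentially the same route as the paper: verify irreflexivity and transitivity directly by lifting the total order on $\mathcal{U}_T$ through the constraint relating $\pi_{t_{min}}$ and $\pi_{t_{max}}$ of each event, and obtain antisymmetry for free from the remark in Definition~\ref{def:st-par-ord}. The only difference is cosmetic: the paper chains the transitivity inequalities with $\pi_{t_{min}}(e') \leq \pi_{t_{max}}(e')$ where you use the strict form, and you correctly observe that either suffices.
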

\begin{proof}
	All properties characterizing strict partial orders are fulfilled by $\prec_\mathcal{E}$. For all $e, e', e'' \in \mathcal{E}_U^S$ we have:
	\begin{itemize}
		\item Irreflexivity: this property is always verified, since $\pi_{t_{max}}(e) < \pi_{t_{min}}(e)$ is false (see Definition~\ref{def:events}).
		\item Transitivity: since $\pi_{t_{max}}(e) < \pi_{t_{min}}(e') \leq \pi_{t_{max}}(e') < \pi_{t_{min}}(e'')$ and $\mathcal{U}_T$ is totally ordered, we have that $\pi_{t_{max}}(e) < \pi_{t_{min}}(e'')$ and this property is always verified.
	\end{itemize}
\end{proof}

\begin{lemma}[Uncomparable events share possible timestamp values]\label{lem:uncomp}
	Let $e, e' \in \mathcal{E}_U^S$ be two strongly uncertain events. $e$ and $e'$ are uncomparable with respect to the strict partial order $\prec_\mathcal{E}$ (i.e., neither $e \prec_\mathcal{E} e'$ nor $e' \prec_\mathcal{E} e$ are true) if and only if $e$ and $e'$ share some possible values of their timestamp.
\end{lemma}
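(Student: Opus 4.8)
The plan is to unfold both sides of the biconditional into elementary statements about the two timestamp intervals and then reduce the claim to the standard overlap criterion for closed intervals, leaning on the fact that $\mathcal{U}_T$ is totally ordered. Writing the interval of $e$ as $[\pi_{t_{min}}(e), \pi_{t_{max}}(e)]$ and that of $e'$ as $[\pi_{t_{min}}(e'), \pi_{t_{max}}(e')]$, the phrase ``$e$ and $e'$ share a possible timestamp value'' means exactly that there is some $t \in \mathcal{U}_T$ lying in both intervals, i.e. the intervals intersect. On the other side, uncomparability asserts that both $\pi_{t_{max}}(e) < \pi_{t_{min}}(e')$ and $\pi_{t_{max}}(e') < \pi_{t_{min}}(e)$ fail; since $\mathcal{U}_T$ is total, negating these strict inequalities yields the two non-strict inequalities $\pi_{t_{min}}(e') \leq \pi_{t_{max}}(e)$ and $\pi_{t_{min}}(e) \leq \pi_{t_{max}}(e')$. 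The whole lemma is thus the familiar equivalence $[a,b] \cap [c,d] \neq \emptyset \iff a \leq d \wedge c \leq b$ dressed in the paper's notation.

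For the implication that a shared value forces uncomparability, I would take a witness $t$ belonging to both intervals and argue by contradiction: if $e \prec_\mathcal{E} e'$ held, then $\pi_{t_{max}}(e) < \pi_{t_{min}}(e') \leq t \leq \pi_{t_{max}}(e)$, which is impossible, and the symmetric argument rules out $e' \prec_\mathcal{E} e$. For the converse, uncomparability implies a shared value, the only content is exhibiting a witness explicitly. I would set $t = \max(\pi_{t_{min}}(e), \pi_{t_{min}}(e'))$ and, assuming without loss of generality (by the symmetry of both conditions in $e$ and $e'$) that $t = \pi_{t_{min}}(e)$, check membership: $t$ lies in $e$'s interval because $\pi_{t_{min}}(e) \leq \pi_{t_{max}}(e)$ holds by the event definition (the same fact used for irreflexivity in Proposition~\ref{th:ord}), and $t$ lies in $e'$'s interval because $\pi_{t_{min}}(e') \leq t$ by the choice of the maximum and $t = \pi_{t_{min}}(e) \leq \pi_{t_{max}}(e')$ is precisely one of the two inequalities extracted from uncomparability.

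The step that deserves genuine care, rather than posing a real obstacle, is the choice of witness in the forward direction: because $\mathcal{U}_T$ is assumed only to be totally ordered and not dense, an arbitrary ``midpoint'' of the overlap need not exist in $\mathcal{U}_T$, so the witness must be taken to be one of the four endpoints. Selecting the larger of the two lower endpoints guarantees both that $t$ is an actual element of $\mathcal{U}_T$ and that it sits inside both intervals. Beyond this point the argument is pure bookkeeping, and the closedness of the intervals ensures that a coincident endpoint already counts as a shared value, so no boundary cases are lost.
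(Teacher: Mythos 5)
Your proof is correct and follows essentially the same route as the paper's: both directions reduce to the standard interval-overlap criterion over the totally ordered $\mathcal{U}_T$, and your witness $\max(\pi_{t_{min}}(e), \pi_{t_{min}}(e'))$ is exactly the element the paper exhibits via its case split on which lower endpoint is larger. Your remark that the witness must be an endpoint because $\mathcal{U}_T$ need not be dense is a nice point the paper leaves implicit, and your contradiction argument for the ``shared value implies uncomparable'' direction is marginally more direct than the paper's appeal to one extreme being encompassed by the other interval, but these are cosmetic differences.
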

\begin{proof}
	\leavevmode \\
	$(\Rightarrow)$ From Definition~\ref{def:ord}, it follows that two events $e, e' \in \mathcal{E}_U^S$ are comparable if and only if either $\pi_{t_{max}}(e) < \pi_{t_{min}}(e')$ or $\pi_{t_{max}}(e') < \pi_{t_{min}}(e)$. If both are false, then $\pi_{t_{min}}(e') \leq \pi_{t_{max}}(e)$ and $\pi_{t_{min}}(e) \leq \pi_{t_{max}}(e')$. If we assume that $\pi_{t_{min}}(e) \leq \pi_{t_{min}}(e')$ then $\pi_{t_{min}}(e) \leq \pi_{t_{min}}(e') \leq \pi_{t_{max}}(e)$, while if $\pi_{t_{min}}(e) > \pi_{t_{min}}(e')$ then $\pi_{t_{min}}(e') < \pi_{t_{min}}(e) \leq \pi_{t_{max}}(e')$. In both cases, there are values common to both uncertain timestamps.
	
	\leavevmode \\
	$(\Leftarrow)$ If the two events share timestamp values, it follows that at least one of the extremes of one event is encompassed by the extremes of the other. Assume that $e$ encompasses at least one of the extremes of $e'$ (the other case is symmetric): then either $\pi_{t_{min}}(e) \leq \pi_{t_{min}}(e') \leq \pi_{t_{max}}(e)$ or $\pi_{t_{min}}(e) \leq \pi_{t_{max}}(e') \leq \pi_{t_{max}}(e)$. In the first case, considering that $\mathcal{U}_T$ is totally ordered and that $\pi_{t_{min}}(e') \leq \pi_{t_{max}}(e')$, we have that both $\pi_{t_{min}}(e') \leq \pi_{t_{max}}(e)$ and $\pi_{t_{min}}(e) \leq \pi_{t_{max}}(e')$ are true, and $e$ and $e'$ are uncomparable. The second case is proved analogously.
\end{proof}

\begin{definition}[Realizations of simple uncertain traces]\label{def:real}
	Let $\sigma_U \in \mathcal{T}_U$ be a simple uncertain trace. An \emph{order-realization} $\sigma_O \in \mathcal{S}_{\sigma_U}$ is a permutation of the events in $\sigma_U$ such that for all $1 \leq i < j \leq |\sigma_O|$ we have that $\sigma_O[j] \nprec_\mathcal{E} \sigma_O[i]$, i.e., $\sigma_O$ is a correct evaluation order for $\sigma_U$ over $\prec_\mathcal{E}$, and the (total) order in which events are sorted in $\sigma_O$ is a linear extension of the strict partial order $\prec_\mathcal{E}$. We denote with $\mathcal{R}_O(\sigma_U)$ the set of all such order-realizations of the trace $\sigma_U$.
	
	%
	
	Given an order-realization $\sigma_O \in \mathcal{R}_O(\sigma_U)$, the sequence $\sigma = \langle a_1, a_2, \dots, a_n \rangle \in {\mathcal{U}_A}^*$ is a \emph{realization} of $\sigma_O$ if there exists a total function $f \colon \{1, 2, \dots, n\} \rightarrow \sigma_O$ such that:
	\begin{itemize}
		\item For all $1 \leq i \leq n$, $a_i \in \pi_a(f(i))$,
		\item $\langle f(1), f(2), \dots, f(n) \rangle$ is a subsequence of $\sigma_O$,
		\item For all $e \in \sigma_O$ with $\pi_o(\sigma_O) = \:!$ there exists $1 \leq i \leq n$ such that $f(i) = e$.
	\end{itemize}
\end{definition}

We denote with $\mathcal{R}'(\sigma_O) \subseteq {\mathcal{U}_A}^*$ the set of all such realizations of the order-realization $\sigma_O$. We denote with $\mathcal{R}(\sigma_U) \subseteq {\mathcal{U}_A}^*$ the union of the realizations obtainable from all the order-realizations of $\sigma_U$: $\mathcal{R}(\sigma_U) = \bigcup_{\sigma_O \in \mathcal{R}_O(\sigma_U)} \mathcal{R}'(\sigma_O)$.

Let us see some examples of realizations of uncertain traces. Let $\sigma_U$ be the uncertain trace shown in Table~\ref{table:uncertaintrace}. We then have that 
$\sigma_U$ has three order-realizations:

$$
\mathcal{R}_O(\sigma_U) = \{\langle e_3, e_1, e_2, e_4 \rangle, \langle e_1, e_3, e_2, e_4 \rangle, \langle e_1, e_2, e_3, e_4 \rangle\}
$$

We can then compute the realizations of one of the order-realizations of $\sigma_U$:

\begin{align*}
\mathcal{R}'(\langle e_1, e_2, e_3, e_4 \rangle) = \{\langle \textit{NightSweats}, \textit{PrTP}, \textit{Splenomeg}, \textit{Adm} \rangle,\\ \langle \textit{NightSweats}, \textit{SecTP}, \textit{Splenomeg}, \textit{Adm} \rangle,\\ \langle \textit{PrTP}, \textit{Splenomeg}, \textit{Adm} \rangle,\\ \langle \textit{SecTP}, \textit{Splenomeg}, \textit{Adm} \rangle\}
\end{align*}

Simple uncertain traces and logs carry less information than their certain counterparts. Nevertheless, it is possible to extend existing process mining algorithms to extract the information in a simple uncertain log to design a process model that describes its possible behavior, or verify that it conforms to a given normative model.

\section{Conformance Checking on Uncertain Event Data}\label{sec:conformance}

Depending on the possible values for $a_s$, $t_{min}$, $t_{max}$, and $u$ there are multiple possible realizations of a trace. This means that, given a model, a simple uncertain trace could be fitting for certain realizations, but non-fitting for others. The question we are interested in answering is: given a simple uncertain trace and a Petri net process model, is it possible to find an \emph{upper and lower bound} for the conformance score? Usually we are interested in the optimal alignments (the ones with the minimal cost). However, we are now interested in the minimum and maximum cost of alignments in the realization set of a simple uncertain trace.

\begin{definition}[Upper and Lower Bound on Alignment Cost for a Trace]
	Let $\sigma_U \in \mathcal{T}_U$ be a simple uncertain trace, and let $\mi{SN} \in \mathcal{U}_{\mi{SN}}$ be a system net. The \emph{upper bound for the alignment cost} is a function $\delta_{max} \colon \mathcal{T}_U \to \mathbb{N}$ such that $\delta_{max}(\sigma_U) = \max_{\sigma \in \mathcal{R}(\sigma_U)} \delta(\lambda_{\mi{SN}}(\sigma))$. The \emph{lower bound for the alignment cost} is a function $\delta_{min} \colon \mathcal{T}_U \to \mathbb{N}$ such that $\delta_{min}(\sigma_U) = \min_{\sigma \in \mathcal{R}(\sigma_U)} \delta(\lambda_{\mi{SN}}(\sigma))$.
\end{definition}

A simple way to compute the upper and lower bounds for the cost of any uncertain trace is using a brute-force approach: enumerating the possible realizations of the trace, then searching for the costs of optimal alignments for all the realizations, and picking the minimum and maximum as bounds. We now present a technique which improves the performance of calculating the lower bound for conformance cost with respect to a brute-force method.

We will produce a version of the event net that embeds the possible behaviors of the uncertain trace. We define a \emph{behavior net}, a Petri net that can replay all and only the realizations of an uncertain trace. As an intermediate step in order to obtain such a Petri net, we first build the \emph{behavior graph}, a dependency graph representing the uncertain trace. This graph contains a vertex for each uncertain event in the trace and contains an edge between two vertices if the corresponding uncertain events happen one directly after the other in at least one realization of the uncertain trace.


\begin{definition}[Behavior Graph]\label{def:bg}
	Let $\sigma_U \in \mathcal{T}_U$ be a simple uncertain trace. A behavior graph $\beta \colon \mathcal{T}_U \to \mathcal{U}_G$ is the transitive reduction of a directed graph $\rho(G)$, where $G = (V, E) \in \mathcal{U}_G$ is defined as:
	\begin{itemize}
		\item $V = \{e \in \sigma_U \}$,
		\item $E = \{(v, w) \mid v, w \in V \wedge v \prec_\mathcal{E} w\}$.
	\end{itemize}
\end{definition}

The behavior graph provides a structured representation of the uncertainty on the timestamp: when a specific vertex has two or more outbound edges, the events corresponding to the destination vertices can occur in any order, concurrently with each other. We can see the result on the example trace in Figures~\ref{fig:graphcomp} and~\ref{fig:graphred}.

%
%
%
%
%
%

\begin{figure}
	\centering
	\begin{minipage}[t]{0.48\textwidth}
		\centering
		\begin{tikzpicture}[->, node distance=3.5cm, nodes={draw, ellipse, scale=.65}]
		
		\node[dashed]	(A)	[label=below:$e_1$]								{$\text{NightSweats}$};
		\node			(B)	[right of=A, label=below:$e_2$]					{$\{\text{PrTP, SecTP}\}$};
		\node			(C)	[below of=B, yshift=1.5cm, label=below:$e_3$]	{$\text{Splenomeg}$};
		\node			(D)	[right of=C, yshift=1cm, label=below:$e_4$]		{$\text{Adm}$};
		
		\path
		(A) edge (B)
		edge [bend left] (D)
		(B) edge (D)
		(C) edge (D);
		\end{tikzpicture}
		\caption{The graph of the trace in Table~\ref{table:uncertaintrace} before applying the transitive reduction. All the nodes in the graph are pairwise connected based on precedence relationships; pairs of nodes for which the order is unknown are not connected. The dashed node represents an indeterminate event.}
		\label{fig:graphcomp}
	\end{minipage}\hfill
	\begin{minipage}[t]{0.48\textwidth}
		\centering
		\begin{tikzpicture}[->, node distance=3.5cm, nodes={draw, ellipse, scale=.65}]
		
		\node[dashed]	(A)	[label=below:$e_1$]								{$\text{NightSweats}$};
		\node			(B)	[right of=A, label=below:$e_2$]					{$\{\text{PrTP, SecTP}\}$};
		\node			(C)	[below of=B, yshift=1.5cm, label=below:$e_3$]	{$\text{Splenomeg}$};
		\node			(D)	[right of=C, yshift=1cm, label=below:$e_4$]		{$\text{Adm}$};
		
		\path
		(A) edge (B)
		(B) edge (D)
		(C) edge (D);
		\end{tikzpicture}
		\caption{The behavior graph of the trace in Table~\ref{table:uncertaintrace}. The transitive reduction removed the arc between $e_1$ and $e_4$, since they are reachable through $e_2$. This graph has a minimal number of arcs while conserving the same reachability relationship between nodes.}
		\label{fig:graphred}
	\end{minipage}
\end{figure}

\begin{theorem}[Correctness of behavior graphs]~\label{th:bg-correct}
	Let $\sigma_U \in \mathcal{T}_U$ be a simple uncertain trace and $bg(\sigma_U) = (V,E)$ be its behavior graph. The behavior graph $bg(\sigma_U)$ is acyclic; additionally, the set of all topological sortings of the behavior graph corresponds to the set of order-realizations of $\sigma_U$: $\mathcal{O}_{bg(\sigma_U)} = \mathcal{R}_O(\sigma_U)$.
\end{theorem}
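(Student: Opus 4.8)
The plan is to establish the two assertions separately: acyclicity of $bg(\sigma_U)$, and then the set equality $\mathcal{O}_{bg(\sigma_U)} = \mathcal{R}_O(\sigma_U)$. For acyclicity, I would first argue that the intermediate graph $G = (V,E)$ of Definition~\ref{def:bg} is acyclic. Every edge of $G$ corresponds to a $\prec_\mathcal{E}$-relation, so any hypothetical cycle would, by transitivity of $\prec_\mathcal{E}$ (Proposition~\ref{th:ord}), force $e \prec_\mathcal{E} e$ for some event $e$ on the cycle, contradicting irreflexivity. This also guarantees that the transitive reduction $\rho(G)$ is well-defined, since it is only defined on directed acyclic graphs. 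Because the transitive reduction is a subgraph ($E_r \subseteq E$), removing edges cannot create cycles, so $bg(\sigma_U) = \rho(G)$ is acyclic as well.

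The crux of the argument is to show that reachability in the behavior graph coincides exactly with the order $\prec_\mathcal{E}$. I would first prove this for $G$: an edge $(v,w)$ belongs to $E$ precisely when $v \prec_\mathcal{E} w$, and since $\prec_\mathcal{E}$ is transitive, for any two \emph{distinct} vertices $v, w$ a path from $v$ to $w$ exists in $G$ if and only if $v \prec_\mathcal{E} w$. The backward direction is immediate, as the single edge $\langle v, w \rangle$ is a path; the forward direction collapses an arbitrary path $v = u_0, u_1, \dots, u_k = w$ into $v \prec_\mathcal{E} w$ by repeated application of transitivity. Invoking the reachability-preservation property of the transitive reduction (which by definition conserves the connectivity relation of $G$), the same equivalence carries over to $bg(\sigma_U) = \rho(G)$: for distinct $v, w$, we have $v \mapsto w$ if and only if $v \prec_\mathcal{E} w$, and hence $v \not\mapsto w$ if and only if $v \nprec_\mathcal{E} w$.

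With this equivalence in place, the set equality is almost immediate. Since $V = \{e \in \sigma_U\}$, the permutation spaces coincide: $\mathcal{S}_V = \mathcal{S}_{\sigma_U}$. A permutation $o$ is a topological sorting of $bg(\sigma_U)$ (Definition~\ref{def:topsort}) exactly when $o[j] \not\mapsto o[i]$ for all $1 \leq i < j \leq |o|$, which by the reachability equivalence is equivalent to $o[j] \nprec_\mathcal{E} o[i]$ for all such $i < j$ --- precisely the condition defining an order-realization in $\mathcal{R}_O(\sigma_U)$ (Definition~\ref{def:real}). Because the indices satisfy $i < j$, the two compared events are always distinct, so the equivalence applies without the degenerate self-reachability case intervening. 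Thus the defining conditions of the two sets are identical and $\mathcal{O}_{bg(\sigma_U)} = \mathcal{R}_O(\sigma_U)$.

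The main obstacle is the reachability equivalence, and within it the forward implication (path implies order relation), which rests squarely on the transitivity of $\prec_\mathcal{E}$; the remaining care is simply in correctly transferring this equivalence from $G$ to its transitive reduction via the stated reachability-preservation property, after which the theorem reduces to matching two syntactically identical conditions.
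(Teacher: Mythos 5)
Your proof is correct and follows essentially the same route as the paper's: acyclicity via transitivity of $\prec_\mathcal{E}$ contradicting irreflexivity/antisymmetry, and the set equality by matching the defining conditions of topological sortings and order-realizations. The only difference is one of detail --- the paper dismisses the equality $\mathcal{O}_{bg(\sigma_U)} = \mathcal{R}_O(\sigma_U)$ as ``immediately following'' from the definitions, whereas you correctly identify and prove the non-trivial step it hides (that reachability in $\rho(G)$ coincides with $\prec_\mathcal{E}$ on distinct vertices, via reachability preservation under transitive reduction), and you also note that $G$ must be shown acyclic before $\rho(G)$ is even well-defined, a point the paper glosses over by arguing directly about a cycle in the reduced graph.
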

\begin{proof}
	From Proposition~\ref{th:ord} we know that $\prec_\mathcal{E}$ is a strict partial order. Let $p = \langle p_1, p_2, \dots, p_m \rangle \in P_{bg}$ be a path in the behavior graph: if $p$ was a cycle, that means that according to Definition~\ref{def:bg} we have $p_1 \prec_\mathcal{E} p_2 \prec_\mathcal{E} \dots \prec_\mathcal{E} p_m \prec_\mathcal{E} p_1$. Since $\prec_\mathcal{E}$ is transitive, we have that $p_1 \prec_\mathcal{E} p_m$ and $p_m \prec_\mathcal{E} p_1$, which would violate the antisymmetry property in Definition~\ref{def:st-par-ord} and would contradict Proposition~\ref{th:ord}. Thus the behavior graph is necessarily acyclic.
	
	The result $\mathcal{O}_{bg(\sigma_U)} = \mathcal{R}_O(\sigma_U)$ immediately follows from Definitions~\ref{def:topsort},~\ref{def:real} and~\ref{def:bg}, and from Proposition~\ref{th:ord}.
\end{proof}

\begin{lemma}[Semantics of behavior graphs]
	Events connected by paths in a given behavior graph have a precedence relationship; events not connected by any paths share possible values for their timestamps and thus might have happened in any order.
\end{lemma}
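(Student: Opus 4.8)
The plan is to reduce both halves of the statement to results already in hand, namely Lemma~\ref{lem:uncomp} and the reachability-preserving property of the transitive reduction. The single observation that drives everything is that, by Definition~\ref{def:bg}, the edge set of the pre-reduction graph $G = (V,E)$ is exactly the relation $\prec_\mathcal{E}$, i.e. $(v,w) \in E \Leftrightarrow v \prec_\mathcal{E} w$. Since $\prec_\mathcal{E}$ is transitive (Proposition~\ref{th:ord}), its transitive closure equals itself, so for two distinct events $e, e'$ there is a directed path from $e$ to $e'$ in $G$ if and only if $e \prec_\mathcal{E} e'$. Moreover the behavior graph $bg(\sigma_U) = \rho(G)$ preserves reachability by the very definition of the transitive reduction, so $e \overset{bg(\sigma_U)}{\mapsto} e'$ holds exactly when $e \overset{G}{\mapsto} e'$ does. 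Chaining these equivalences yields the clean characterisation $e \overset{bg(\sigma_U)}{\mapsto} e' \Leftrightarrow e \prec_\mathcal{E} e'$, which is the workhorse of the proof.

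For the first claim I would take two events $e$ and $e'$ joined by a path in $bg(\sigma_U)$ and split on the direction of that path. If the path runs from $e$ to $e'$, the characterisation yields $e \prec_\mathcal{E} e'$; if it runs from $e'$ to $e$, it yields $e' \prec_\mathcal{E} e$. In either case the two events stand in a precedence relationship under $\prec_\mathcal{E}$, which is precisely what the first half asserts.

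For the second claim I would argue contrapositively through uncomparability. If $e$ and $e'$ are joined by no path in $bg(\sigma_U)$ --- neither $e \overset{bg(\sigma_U)}{\mapsto} e'$ nor $e' \overset{bg(\sigma_U)}{\mapsto} e$ --- then by the characterisation neither $e \prec_\mathcal{E} e'$ nor $e' \prec_\mathcal{E} e$ holds, so $e$ and $e'$ are uncomparable with respect to $\prec_\mathcal{E}$. Lemma~\ref{lem:uncomp} then delivers exactly the conclusion that $e$ and $e'$ share possible timestamp values, and the correspondence $\mathcal{O}_{bg(\sigma_U)} = \mathcal{R}_O(\sigma_U)$ of Theorem~\ref{th:bg-correct} justifies the accompanying remark that such events may appear in either relative order across the order-realizations.

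The only real care-point, and the step I expect to need the most attention, is the bridging equivalence between graph reachability and $\prec_\mathcal{E}$. Two things must be handled cleanly: first, that passing to the transitive reduction destroys no reachability relation, which is guaranteed directly by the definition of $\rho$, since it keeps $E_r \subseteq E$ while preserving connectivity; and second, that no reachability is spuriously created, which is immediate because $\rho(G)$ has no edges outside $G$. I would also note that the equivalence is stated for \emph{distinct} events, so the trivial length-one path consisting of a single vertex is irrelevant, and that the acyclicity of $bg(\sigma_U)$ established in Theorem~\ref{th:bg-correct} rules out the degenerate case of a vertex being reachable from itself. With the equivalence secured, both halves of the lemma follow immediately.
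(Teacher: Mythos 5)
Your proposal is correct and follows essentially the same route as the paper, whose proof simply cites Proposition~\ref{th:ord}, Theorem~\ref{th:bg-correct}, and Lemma~\ref{lem:uncomp}; you have merely made explicit the bridging step (reachability in $bg(\sigma_U)$ coincides with $\prec_\mathcal{E}$ because the pre-reduction edge set is exactly $\prec_\mathcal{E}$, which is already transitively closed, and $\rho$ preserves reachability) that the paper leaves implicit.
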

\begin{proof}
	Immediately follows from Proposition~\ref{th:ord}, Theorem~\ref{th:bg-correct}, and from Lemma~\ref{lem:uncomp}.
\end{proof}

We then obtain a \emph{behavior net} by replacing every vertex in the behavior graph with one or more transitions in an XOR configuration, each representing an activity contained in the $\pi_a$ set of the corresponding uncertain event. Every edge of the behavior graph becomes a place in the behavior net, connected from and to the transitions corresponding to, respectively, its source and target nodes in the graph.

\begin{definition}[Behavior Net]\label{def:bn}
	Let $\sigma_U \in \mathcal{T}_U$ be a simple uncertain trace, and let $bg(\sigma_U) = (V, E)$ be the corresponding behavior graph. A \emph{behavior net} $bn \colon \mathcal{T}_U \to \mathcal{U}_{\mi{SN}}$ is a system net $bn(\sigma_U) = (P, T, F, l, M_{init}, M_{final})$ such that:
	\begin{itemize}
		\item $P = E \cup \\
		\{ (\textsc{start}, v) \mid v \in V \wedge \nexists_{v' \in V}(v', v) \in E \} \cup \\
		\{ (v, \textsc{end}) \mid v \in V \wedge \nexists_{v' \in V}(v, v') \in E \}$,
		\item $T = \{(v, a) \mid v \in V \wedge a \in \pi_a(v)\} \cup \{(v, \tau) \mid v \in V \wedge \pi_o(v) = \:?\}$,
		\item $F = \{((\textsc{start}, v_1), (v_2, a)) \in E \times T \mid v_1 = v_2 \} \cup \\
		\{((v_1, a),(v_2, w)) \in T \times E \mid v_1 = v_2 \} \cup \\
		\{((v, w_1),(w_2, a)) \in E \times T \mid w_1 = w_2 \} \cup \\
		\{((v_1, a), (v_2, \textsc{end}) \in T \times E \mid v_1 = v_2 \}$,
		\item $l = \{((v, a), a) \mid (v, a) \in T \wedge a \neq \tau\}$,
		\item $M_{\text{init}} = [(\textsc{start}, v) \in P \mid v \in V]$,
		\item $M_{\text{final}} = [(v, \textsc{end}) \in P \mid v \in V]$.
	\end{itemize}
\end{definition}

\begin{figure}
	\centering
	\begin{tikzpicture}[node distance=.5cm and .9cm, >=stealth']
	
	\tikzstyle{place} = [circle,draw,thick,minimum size=6mm]
	\tikzstyle{transition} = [rectangle,draw,thick,minimum size=4mm]
	\tikzstyle{invisible} = [transition, fill=black]
	\tikzstyle{finaltoken} = [token, fill=black!30]
	
	\node [place,tokens=1] (p1) [label=above:{\scriptsize $(\textsc{start}, e_1)$}] {};
	
	\node [transition] (t1) [above right= of p1, label=above:{\scriptsize $(e_1, NightSweats)$}] {NightSweats};
	\draw [->] (p1) to (t1.west);
	
	\node [invisible] (t2) [below right= of p1, label=above:{\scriptsize $(e_1, \tau)$}] {NightSweats};
	\draw [->] (p1) to (t2.west);
	
	\node [place] (p2) [below right= of t1, label=above:{\scriptsize $(e_1, e_2)$}] {};
	\draw [->] (t1.east) to (p2);
	\draw [->] (t2.east) to (p2);
	
	\node [transition] (t3) [above right= of p2, label=above:{\scriptsize $(e_2, PrTP)$}] {PrTP};
	\draw [->] (p2) to (t3.west);
	
	\node [transition] (t4) [below right= of p2, label=above:{\scriptsize $(e_2, SecTP)$}] {SecTP};
	\draw [->] (p2) to (t4.west);
	
	\node [place] (p3) [below right= of t3, label=above:{\scriptsize $(e_2, e_4)$}] {};
	\draw [->] (t3.east) to (p3);
	\draw [->] (t4.east) to (p3);
	
	\node [place,tokens=1] (p4) [below left= of t2, label=above:{\scriptsize $(\textsc{start}, e_3)$}] {};
	
	\node [place] (p5) [below right= of t4, label=above:{\scriptsize $(e_3, e_4)$}] {};
	
	\node [transition] (t5) at ($(p4)!0.5!(p5)$) [label=above:{\scriptsize $(e_3, Splenomeg)$}] {Splenomeg};
	\draw [->] (p4) to (t5);
	\draw [->] (t5) to (p5);
	
	\node [transition] (t6) [above right= of p5, label=above:{\scriptsize $(e_4, Adm)$}] {Adm};
	\draw [->] (p3) to (t6.north west);
	\draw [->] (p5) to (t6.south west);
	
	\node [place] (p6) [right= of t6, label=above:{\scriptsize $(e_4, \textsc{end})$}] {};
	\draw [->] (t6) to (p6);
	\node [finaltoken] at (p6) {};
	\end{tikzpicture}
	\caption{The behavior net corresponding to the uncertain trace in Table~\ref{table:uncertaintrace}. The labels show the objects involved in the construction of Definition~\ref{def:bn}. The initial marking is displayed; the gray ``token slot'' represents the final marking.}
	\label{fig:behnet}
\end{figure}
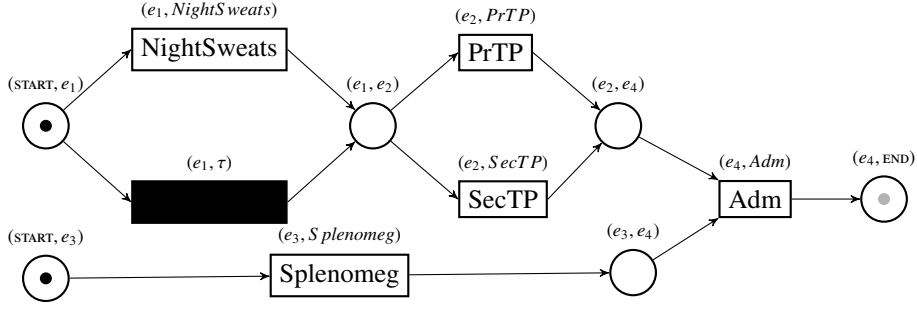

In Figure~\ref{fig:behnet}, we can see the behavior net corresponding to the uncertain trace in Table~\ref{table:uncertaintrace}. It is important to note that every set of edges in the behavior graph with the same source vertex generates an AND split in the behavior net, and a set of edges with the same destination vertex generates an AND join. At the same time, the transitions whose labels correspond to different possible activities in an uncertain event will appear in an XOR construct inside the behavior net.

Thus, in the behavior net, every set of events which timestamps share some possible values will be represented by transitions inside an AND construct, and will then be able to execute in any order allowed by their uncertain timestamp attributes. In the same fashion, an event with uncertainty on the activity will be represented by a number of transitions in an XOR construct. This allows replaying any possible choice for the activity attribute. It follows that, by construction, for a certain simple uncertain trace $\sigma_U$ we have that $\phi(bn(\sigma_U)) = \mathcal{R}(\sigma_U)$.

We can use the behavior net of an uncertain trace $\sigma_U$ in lieu of the event net to compute alignments with a model $\mi{SN} \in \mathcal{U}_{\mi{SN}}$; the search algorithm returns an optimal alignment, a sequence of moves $(x, (y, t))$ with $x \in \mathcal{U}_A$, $y \in \mathcal{U}_A$ and $t$ transition of the model $\mi{SN}$. After removing all ``$\nomove$'' symbols, the sequence of first elements of the moves will describe a complete firing sequence $\sigma_{bn}$ of the behavior net. Since $\sigma_{bn}$ is complete, $\sigma_{bn} \in \phi(bn(\sigma_U))$ and, thus, $\sigma_{bn} \in \mathcal{R}(\sigma_U)$. It follows that $\sigma_{bn}$ is a realization of $\sigma_U$, and the search algorithm ensures that $\sigma_{bn}$ is a realization with optimal conformance cost for the model $\mi{SN}$: $\delta(\lambda_{\mi{SN}}(\sigma_{bn})) = \min_{\sigma \in \mathcal{R}(\sigma_U)} \lambda_{\mi{SN}}(\sigma) = \delta_{min}(\sigma_U)$.

\begin{theorem}[Correctness of behavior nets]\label{th:bn-correct}
	Let $\sigma_U \in \mathcal{T}_U$ be a simple uncertain trace and let $bg(\sigma_U) = (V, E)$ be its behavior graph. The corresponding behavior net $bn(\sigma_U) = (P, T, F, l, M_{init},\allowbreak M_{final})$ can replay all and only the realizations of $\sigma_U$: $\phi(bn(\sigma_U)) = \mathcal{R}(\sigma_U)$.
\end{theorem}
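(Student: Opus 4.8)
The plan is to prove the two set inclusions $\mathcal{R}(\sigma_U) \subseteq \phi(bn(\sigma_U))$ and $\phi(bn(\sigma_U)) \subseteq \mathcal{R}(\sigma_U)$ separately, using Theorem~\ref{th:bg-correct} as the bridge that identifies the topological sortings $\mathcal{O}_{bg(\sigma_U)}$ with the order-realizations $\mathcal{R}_O(\sigma_U)$. The key structural observation to establish first is that, by construction, all transitions $(v, a)$ and $(v, \tau)$ sharing the same vertex $v$ have exactly the same set of input places (the places $(u, v)$ for incoming edges, or the single place $(\textsc{start}, v)$ when $v$ is a source) and the same set of output places (the places $(v, w)$ for outgoing edges, or $(v, \textsc{end})$ when $v$ is a sink). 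Thus the transitions of a fixed vertex form an XOR choice that consumes and produces the same tokens, while distinct vertices are wired together exactly along the edges of $bg(\sigma_U)$, giving AND-splits and AND-joins at vertices with several outgoing or incoming edges.

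Then I would prove the central lemma: every complete firing sequence of $bn(\sigma_U)$ fires exactly one transition for each vertex, and the order of these firings is a topological sorting of $bg(\sigma_U)$. I would argue this by induction on a topological order of the (acyclic, by Theorem~\ref{th:bg-correct}) behavior graph, exploiting a token-counting argument: each start place holds one token initially and is never refilled, and each edge place $(u, v)$ is filled only by firings of $u$ and emptied only by firings of $v$. Since reaching $M_{final}$ requires all start places and all edge places to be empty and each end place to hold exactly one token, the induction shows each source vertex fires exactly once, and, assuming each predecessor of $v$ fires exactly once (thus putting exactly one token on each input place of $v$), that $v$ itself fires exactly once. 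Because a transition of $v$ can only be enabled once every incoming edge place is marked --- hence only after every predecessor has fired --- the resulting firing order respects all edges of $bg(\sigma_U)$ and is therefore a topological sorting.

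With this lemma in hand, the inclusion $\phi(bn(\sigma_U)) \subseteq \mathcal{R}(\sigma_U)$ follows by reading off the visible trace of a complete firing sequence: the vertices, taken in firing order, form a topological sorting $\sigma_O \in \mathcal{O}_{bg(\sigma_U)} = \mathcal{R}_O(\sigma_U)$; defining $f$ to map the $i$-th visible firing to its vertex yields $a_i \in \pi_a(f(i))$, makes $\langle f(1), \dots, f(n)\rangle$ a subsequence of $\sigma_O$, and --- since invisible transitions $(v, \tau)$ exist only for indeterminate events --- forces every determinate event into $\mi{rng}(f)$, matching Definition~\ref{def:real} exactly, so the visible trace lies in $\mathcal{R}'(\sigma_O) \subseteq \mathcal{R}(\sigma_U)$. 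For the reverse inclusion, given a realization $\sigma \in \mathcal{R}'(\sigma_O)$ I would construct the firing sequence that processes the vertices in the order of $\sigma_O$, firing $(v, a_i)$ with the chosen activity when $v = f(i)$ and firing $(v, \tau)$ for the skipped indeterminate events; firing in topological order guarantees each transition is enabled when reached, and a token-balance check confirms the sequence is complete (emptying every start and edge place, leaving one token on each end place), with visible projection exactly $\sigma$.

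The main obstacle I anticipate is the exactly-once structural lemma, and in particular the careful use of the requirement that the complete firing sequence ends in \emph{exactly} $M_{final}$ (all non-end places empty), which is what rules out spurious sequences that leave tokens stranded or fire a vertex zero times; the $\tau$ transitions also need attention, since they must be available precisely for indeterminate events so that the net's skipping behavior coincides with the optional inclusion of indeterminate events in Definition~\ref{def:real}.
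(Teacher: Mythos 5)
Your proposal is correct and follows essentially the same route as the paper's proof: both inclusions are handled separately, with Theorem~\ref{th:bg-correct} (topological sortings $=$ order-realizations) as the bridge, the visible/$\tau$ transitions matching determinate/indeterminate events, and firing sequences read off from (respectively, constructed along) a topological order of the behavior graph. Your explicit token-counting lemma showing that every complete firing sequence fires exactly one transition per vertex is in fact more careful than the paper's treatment, which asserts this ``exactly once'' property and the marking evolution rather informally.
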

\begin{proof}
	Let $(v, v') \in E$ be an edge of the behavior graph, which also defines a place in the behavior net: $(v, v') = p_{v, v'} \in P$. Let us denote with $\mathbb{T}_v$ the set of transitions in the behavior net generated from the vertex $v$: $\mathbb{T}_v = \{(v', a) \in T \mid v' = v \}$.
	
	\leavevmode \\
	$(\subseteq)$ Let $\sigma = \langle a_1, a_2, \dots, a_n \rangle \in \phi(bn(\sigma_U))$ be any certain trace accepted by $bn(\sigma_U)$. Let $\sigma_T = \langle t_1, t_2, \dots, t_n \rangle \in \phi_f(bn(\sigma_U))$ be a complete firing sequence of $bn(\sigma_U)$ yielding $\sigma$, i.e., $l(\sigma_T)\tproj_{\mathcal{U}_A} = \sigma$. Let $\langle v_1, v_2, \dots, v_n \rangle$ be a sequence of vertices in $bg(\sigma_U)$ such that $t_1 = (v_1, a_1), t_2 = (v_2, a_2), \dots,\allowbreak t_n = (v_n, a_n)$ and $t_1 \in \mathbb{T}_{v_1}, t_2 \in \mathbb{T}_{v_2}, \dots, t_n \in \mathbb{T}_{v_n}$. Let $\mathcal{V}$ be the set of all such sequences; by the flow relation in Definition~\ref{def:bn} there must exist a sequence $\sigma_O = \langle v_1, v_2, \dots, v_n \rangle \in \mathcal{V}$ such that $((v_1, a_1),(v_1, v_2)) \in F, ((v_1, v_2),(v_2, a_2)) \in F, ((v_2, a_2),(v_2, v_3)) \in F, ((v_2, v_3),(v_3, a_3)) \in F, \dots, ((v_{n-1}, a_{n-1}),(v_{n-1}, v_n)) \in F, ((v_{n-1}, v_n),(v_n, a_n))\allowbreak \in F$. This implies that $(v_1, v_2) \in E, (v_2, v_3)\allowbreak \in E, \dots, (v_{n-1}, v_n) \in E$. From Definition~\ref{def:bg} we then have that $v_1 \nsucc_\mathcal{E} v_2 \nsucc_\mathcal{E} \dots \nsucc_\mathcal{E} v_n$. Furthermore, since there exist a $\mathbb{T}_v$ for all $v \in V$ and for all $1 \leq i \leq n$ exactly one transition $t_i \in \mathbb{T}_{v_i}$ has to fire to complete the firing sequence, we have that for all $v \in V$, $v \in \sigma_O$ and is unique. Thus, $\sigma_O \in \mathcal{S}_V$ is a permutation of the vertices in $bg(\sigma_U)$. Because all vertices in $\sigma_O$ are sorted by a linear extension of $\prec_\mathcal{E}$, we also have that $\sigma_O \in \mathcal{O}_{bg(\sigma_U)}$ is a topological sorting of the vertices in $bg(\sigma_U)$. By Definition~\ref{def:bg}, we then have that $\sigma_O$ is an order-realization of $\sigma_U$: $\sigma_O \in \mathcal{R}_O(\sigma_U)$. Since, by construction, $l(t_i) \in \pi_a(v_i)$ if $\pi_o(v_i) = \:!$ and $l(t_i) \in \pi_a(v_i) \cup \{\tau\}$ if $\pi_o(v_i) = \:?$, we have that $\sigma = l(\sigma_T)\tproj_{\mathcal{U}_A} \in \mathcal{R}(\sigma_U)$. Since this construction is valid for any $\sigma \in \phi(bn(\sigma_U))$, every complete firing sequence of the behavior net is a realization of $\sigma_U$: $\phi(bn(\sigma_U)) \subseteq \mathcal{R}(\sigma_U)$.
	
	\leavevmode \\
	$(\supseteq)$ Let $\sigma_O \in \mathcal{R}_O(\sigma_U)$ be any order-realization of $\sigma_U$, and let $n = |\sigma_U|$. Since $\sigma_O[1] \prec_\mathcal{E} \sigma_O[2] \prec_\mathcal{E} \dots \prec_\mathcal{E} \sigma_O[n]$ (by Definition~\ref{def:real}), there exists a path $p \in P_{bg(\sigma_U)}$ such that $p = \langle v_1, v_2, \dots, v_n \rangle = \langle \sigma_O[1], \sigma_O[2], \dots, \sigma_O[n] \rangle$ (by Theorem~\ref{th:bg-correct}). Let $p_{1, 2} = (v_1, v_2)$, $p_{2, 3} = (v_2, v_3)$, and so on. Let $t_1 \in \mathbb{T}_{v_1}, t_2 \in \mathbb{T}_{v_2}, \dots, t_n \in \mathbb{T}_{v_n}$ and let $\sigma_T = \langle t_1, t_2, \dots, t_n \rangle$. By the construction in Definition~\ref{def:bn}, in $bn(\sigma_U) = N$ we have that
	$$(N, M_\mi{init}) [t_1\rangle (N, M_{1,2}) [t_2\rangle (N, M_{2, 3}) [t_3\rangle, \dots, [t_{n-1}\rangle (N, M_{n-1, n}) [t_n\rangle (N, M_\mi{final})$$
	where:
	\begin{gather*}
	M_{1, 2} = (M_\mi{start} \bmin [(\mi{start}, v_1)]) \bplus [p_{1, 2}]\\
	M_{2, 3} = (M_{1, 2} \bmin [p_{1, 2}]) \bplus [p_{2, 3}]\\
	\dots\\
	M_{n-1, n} = (M_{n-2, n-1} \bmin [p_{n-2, n-1}]) \bplus [p_{n-1, n}]\\
	M_\mi{final} = (M_{n-1, n} \bmin [p_{n-1, n}]) \bplus [(v_n, \mi{end})]\\
	\end{gather*}
	This construction implies that $(N, M_{\mi{init}})[\sigma_T \rhd (N,M_{\mi{final}})$ and therefore $\sigma_T \in \phi_f(bn(\sigma_U))$.
	
	The definition of the labeling function in the behavior net is such that, for all $1 \leq i \leq n$, we have that $(v_i, a) \in \mathbb{T}_{v_i} \Leftrightarrow a \in \pi_a(v_i)$. By Definition~\ref{def:real}, the labeling of the sequence $\langle t_1, t_2, \dots, t_n \rangle$ projected on the universe of activities is then a realization of the uncertain trace $\sigma_U$ obtained from the possible activity labels of $\sigma_O$: $l(\sigma_T)\tproj_{\mathcal{U}_A} = \mathcal{R}(\sigma_U)$. Since this construction is valid for any $\sigma_O \in \mathcal{R}_O(\sigma_U)$, the behavior net can replay any realization of $\sigma_U$: $\mathcal{R}(\sigma_U) \subseteq \phi(bn(\sigma_U))$. 
\end{proof}

\begin{theorem}[Correctness of uncertain alignments]
	Let $\sigma_U \in \mathcal{T}_U$ be a simple uncertain trace and let $\mi{SN} \in \mathcal{U}_{\mi{SN}}$ be a system net. Computing an alignment using the product net between $\mi{SN}$ and the behavior net $bn(\sigma_U)$ yields the alignment with the lowest cost among all realizations of $\sigma_U$: $\delta(\lambda_{\mi{SN}}(\sigma_{bn})) = \min_{\sigma \in \mathcal{R}(\sigma_U)} \lambda_{\mi{SN}}(\sigma) = \delta_{min}(\sigma_U)$.
\end{theorem}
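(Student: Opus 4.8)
The plan is to reduce the statement to Theorem~\ref{th:bn-correct} together with the established soundness of alignment computation via product nets~\cite{adriansyah2014aligning}. The key observation is that the only structural difference between the standard construction and ours is that the behavior net $bn(\sigma_U)$ takes the place of the event net: whereas an event net replays a single trace, the behavior net replays the whole set $\mathcal{R}(\sigma_U)$. Hence aligning against the product net $bn(\sigma_U) \otimes \mi{SN}$ should explore exactly the alignments of every realization of $\sigma_U$ with $\mi{SN}$, and the $\mathbb{A}^*$ search then returns the globally cheapest of these, which is precisely $\delta_{min}(\sigma_U)$.

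First I would make precise the correspondence between complete firing sequences of the product net and alignments. Using the product-net semantics, every complete firing sequence $\sigma_{PN} \in \phi_f(bn(\sigma_U) \otimes \mi{SN})$ decomposes into a sequence $\gamma$ of log-only, model-only, and synchronous moves whose projection on the first component (ignoring $\nomove$ and $\tau$) is a complete firing sequence of $bn(\sigma_U)$, and whose projection on the second is a complete firing sequence of $\mi{SN}$. By Theorem~\ref{th:bn-correct}, $\phi(bn(\sigma_U)) = \mathcal{R}(\sigma_U)$, so the log-projection yields some $\sigma \in \mathcal{R}(\sigma_U)$ and $\gamma$ is an alignment of $\sigma$ with $\mi{SN}$, i.e. $\gamma \in \Gamma_{\sigma,\mi{SN}}$. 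Conversely, for any $\sigma \in \mathcal{R}(\sigma_U)$ the behavior net admits a complete firing sequence producing $\sigma$, so every $\gamma \in \Gamma_{\sigma,\mi{SN}}$ arises from some complete firing sequence of the product net. Thus the alignments representable by the product net are exactly $\bigcup_{\sigma \in \mathcal{R}(\sigma_U)} \Gamma_{\sigma,\mi{SN}}$.

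The delicate step is the cost accounting for indeterminate events. Their non-occurrence is modeled by the invisible transitions $(v, \tau)$ of $bn(\sigma_U)$; since $\tau$ labels cannot synchronize (the product net pairs only transitions with equal non-$\tau$ labels), such a transition can fire only as a move on the log. Treating it, like any invisible transition, as cost-free and transparent under the projection $\tproj_{\mathcal{U}_A}$ means that firing it simply drops the corresponding event, producing a shorter realization at no cost --- which matches the fact that $\mathcal{R}(\sigma_U)$ contains realizations in which indeterminate events are absent. I would verify that, with this convention, the cost of each complete firing sequence of the product net equals $\delta(\gamma)$ for its associated alignment, so that minimizing firing-sequence cost coincides with minimizing alignment cost.

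Finally I would conclude by interchanging the minima. The $\mathbb{A}^*$ search returns a minimum-cost complete firing sequence of the product net; removing its $\nomove$ symbols and projecting gives $\sigma_{bn} \in \phi(bn(\sigma_U)) = \mathcal{R}(\sigma_U)$, a genuine realization of $\sigma_U$. Because the search minimizes cost over all complete firing sequences, i.e. over $\bigcup_{\sigma \in \mathcal{R}(\sigma_U)} \Gamma_{\sigma,\mi{SN}}$, its returned cost equals $\min_{\sigma \in \mathcal{R}(\sigma_U)} \min_{\gamma \in \Gamma_{\sigma,\mi{SN}}} \delta(\gamma) = \min_{\sigma \in \mathcal{R}(\sigma_U)} \delta(\lambda_{\mi{SN}}(\sigma)) = \delta_{min}(\sigma_U)$, where the inner minimum collapses by the definition of optimal alignment. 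I expect the main obstacle to be the cost-accounting argument for the $\tau$ transitions and the rigorous invocation of product-net soundness and completeness in the case where the log side is a branching net rather than a linear event net.
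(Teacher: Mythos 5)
Your proposal is correct and follows essentially the same route as the paper's own proof: both reduce the claim to Theorem~\ref{th:bn-correct} combined with the correctness of Adriansyah's product-net/$\mathbb{A}^*$ alignment computation, observing that the behavior net simply replaces the event net so that the product net's complete firing sequences range over $\bigcup_{\sigma \in \mathcal{R}(\sigma_U)} \Gamma_{\sigma,\mi{SN}}$. If anything, you are more careful than the paper, which states neither the explicit bidirectional correspondence between product-net firing sequences and realization alignments nor the cost convention for the invisible $(v,\tau)$ transitions arising from indeterminate events --- the latter being a genuine subtlety you rightly flag, since those $\tau$-transitions sit on the log side of the product rather than the model side.
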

\begin{proof}
	Recall from Definition~\ref{def:optalign} that $\lambda_{\mi{SN}} \colon \mathcal{T} \rightarrow {A_{\mi{LM}}}^* $ is a deterministic mapping that assigns any trace $\sigma$ to an optimal alignment. Adriansyah~\cite{adriansyah2014aligning} details how to compute such a function $\lambda_{\mi{SN}}$ through a state-based $\mathbb{A}^*$ search over a state space defined by the reachable markings of the product net $\mi{SN} \otimes \mi{en}(\sigma)$ between a reference system net $\mi{SN}$ and the event net a certain trace $\sigma \in \mathcal{T}$. As per Definition~\ref{def:alignment}, this search retrieves an alignment which is optimal with respect to a certain cost function $\delta$ and, ignoring ``$\nomove$'', is composed by a complete firing sequence of the system net $\sigma_T \in \phi_f(\mi{SN})$ and the only complete firing sequence of the event net $\mi{en}(\sigma)$, which corresponds to $\sigma$ by construction. Given a system net $\mi{SN} \in {\cal U}_{\mi{SN}}$, an uncertain trace $\sigma_U \in \mathcal{T}_U$ and its respective behavior net $bn(\sigma_U)$, the same search algorithm for $\lambda_{\mi{SN}}$ over $\mi{SN} \otimes \mi{bn}(\sigma_U)$ yields an optimal alignment containing a complete firing sequence for the reference system net $\sigma_T \in \phi_f(\mi{SN})$ and a complete firing sequence for the behavior net of the uncertain trace $\sigma \in \phi(\mi{bn}(\sigma_U))$. Since $\lambda_{\mi{SN}}$ minimizes the cost and $\sigma \in \mathcal{R}(\sigma_U)$ is a valid realization of $\sigma$ due to Theorem~\ref{th:bn-correct}, the resulting alignment has the minimal cost possible over all the possible realizations of the uncertain trace.
\end{proof}

\section{Experiments}\label{sec:experiments}
The framework for computing conformance bounds for uncertain event data illustrated in this paper raises some research questions that need to be addressed in a practical and empirical manner. The questions that we aim to answer are:
\begin{itemize}
	\item \emph{Q1}: how do conformance bounds behave when computed on uncertain data?
	\item \emph{Q2}: what is the impact of different deviating behavior and different types of uncertain behaviors on the conformance score of uncertain event logs?
	\item \emph{Q3}: what is the impact on the efficiency of computing uncertain alignments utilizing the behavior net as opposed to the baseline method of enumerating and aligning all realizations?
	\item \emph{Q4}: what is the impact on the efficiency of computing uncertain alignments utilizing the behavior net on different types of uncertain behavior?
	\item \emph{Q5}: how do trace length and the amount of uncertain events impact the intrinsic variability (i.e., the number of realizations) of uncertain event data?
	\item \emph{Q6}: is it possible to apply uncertain alignments to real-life data to obtain a best- and worst-case scenario for the execution of process instances?
\end{itemize}

The technique to compute conformance for strongly uncertain traces and to create the behavior net hereby described has been implemented in the Python programming language, thanks to the facilities for log importing, model creation and manipulation, and alignments provided by the library PM4Py~\cite{berti2019process}. Uncertainty has been represented in the XES standard through meta-attributes and constructs such as lists, such that any XES importer can read an uncertain log file. The algorithm was designed to be fully compatible with any event log in the XES format (both including and not including uncertainty); the meta-attributes for uncertainty were designed to be backward compatible with other process mining algorithms -- meta-attributes describing the possible values for an uncertain activity or the interval of an uncertain timestamp can also specify a ``fallback value'' which other process mining software will read as (certain) activity or timestamp value.

\subsection{Qualitative and Quantitative Experiments on Synthetic Data}

The first four research questions listed above have been addressed by tests on synthetic uncertain event logs. To this end, we implemented the following software components necessary to the experiments:
\begin{itemize}
	\item a \emph{noise generator}, to introduce deviations in a controlled way in an event log. This component allows to alter the activity label, swap the order of events or add redundant events to an event log with a given probability or frequency.
	\item an \emph{uncertainty generator}, to alter the XES attributes present in the log by appending additional meta-information which is then interpreted as uncertainty. The component introduces uncertainty information in an event log, with the possibility to add any of the strongly uncertain attributes described in the taxonomy of Section~\ref{sec:taxonomy}. This also allows for exporting the generated uncertain event log through the XES exporter of the PM4Py library.
	\item a number of smaller extensions to PM4Py functionalities, also useful for other process mining applications. Examples are the generation of all possible process variants (language) of a PM4Py Petri net, and a memoized version of alignments, which allows to trade off space in memory in order to speed up the computation of the conformance of an event log and a model.
\end{itemize}

The synthetic data generation and the software tools necessary to compute conformance bounds on uncertain event data are available within the PRocess mining OVer uncErtain Data (PROVED) project~\cite{pegoraro2021proved}. A specific branch of the repository hosting the project is dedicated to the experiments presented in this paper, making them readily reproducible\footnote{\url{https://github.com/proved-py/proved-core/tree/Conformance_Checking_over_Uncertain_Event_Data}}.

In order to answer \emph{Q1} and \emph{Q2}, we set up an experiment with the goal to inspect the quality of bounds for conformance scores as increasingly more uncertainty is added to an event log. We ran the tests on synthetic event logs where we added simulated uncertainty. In this way, we can control the amounts and types of uncertainty in event data.

Every iteration of this experiment is as follows:
\begin{enumerate}
	\item We generate a random Petri net with a fixed dimension ($n$ = 10 transitions) through the ProM plugin \emph{``Generate block-structured stochastic Petri nets''}.
	\item We play out an event log consisting of 100 traces generated from the Petri net.
	\item We randomly alter the activity label of a specific percentage $d_a$ of events, swapping it with another label sampled from the universe of activities.
	\item We randomly swap a specific percentage $d_s$ of events with their successor. For each event sampled for the swap, we randomly select either the predecessor or the successor (with 50\% probability each), and we swap the timestamps of the two events, effectively inverting their order. We skip the selection of the swap direction if we select the first event in a trace (which is swapped with the second) or the last event in a trace (which swaps with the second to last).
	\item We randomly duplicate a specific percentage $d_d$ of events. For each event selected for duplication, we create a new event in the trace with identical case ID and activity label, and with timestamp equal to the average between the timestamp of this selected event and the timestamp of the following event. If we select the last event in a trace for duplication, we simply add a fixed delta to the timestamp of the duplicate.
	\item We randomly introduce uncertainty in activity labels for a specific percentage $u_a$ of events. Each event selected for uncertainty on activity labels receives one additional activity label, different from the one it already has, sampled from the universe of activity labels.
	\item We randomly introduce uncertainty in timestamps for a specific percentage $u_t$ of events. For each event sampled for timestamp uncertainty we randomly choose either the predecessor or the successor (with 50\% probability each); the timestamp of the sampled event becomes an interval which extremes are the original timestamp and the timestamp of the predecessor or successor, effectively causing them to mutually overlap. In case the sampled event is the first (resp., last) event in a trace, we skip the selection of the predecessor or successor and we directly consider the successor (resp., predecessor) for the extremes of the uncertain timestamp.
	\item We randomly transform a specific percentage $u_i$ of events in indeterminate events. To these sampled events, we add the ``?'' attribute, in order to mark them as indeterminate.
	\item We measure upper and lower bounds for conformance score with increasing percentage $p$ of uncertainty.
\end{enumerate}

All sampling operations mentioned in the previous list are performed over a uniform probability distribution over the possible values.

In terms of amount of deviation to be considered in each configuration, we aimed at recreating a situation where there is significant deviating behavior with respect to the normative model; for each kind of deviation considered, we introduced anomalous behavior in 30\% of events. Thus, we consider four different settings for the addition of deviating behavior to events logs: \emph{Activity labels} = $\{d_a = 30\%, d_s = 0\%, d_d = 0\%\}$, \emph{Swaps} = $\{d_a = 0\%, d_s = 30\%, d_d = 0\%\}$, \emph{Extra events} = $\{d_a = 0\%, d_s = 0\%, d_d = 30\%\}$ and \emph{All} = $\{d_a = 30\%, d_s = 30\%, d_d = 30\%\}$.

We consider four different settings for the addition of uncertain behavior to events logs: \emph{Activities} = $\{u_a = p, u_t = 0\%, u_i = 0\%\}$, \emph{Timestamps} = $\{u_a = 0\%, u_t = p, u_i = 0\%\}$, \emph{Indeterminate events} = $\{u_a = 0\%, u_t = 0\%, u_i = p\}$ and \emph{All} = $\{u_a = p, u_t = p, u_i = p\}$. We test all four different configurations of deviation against each of the four configurations of uncertainty, with increasing values of $p$, for a total of 16 separate experiments.

Figure~\ref{fig:qual_exps} summarizes our findings. The plots on this figure represent the average of 10 runs as described above.

We can observe that, in general, all plots show the expected behavior: the upper and lower bound for conformance coincide at percentage of uncertain events $p$ = 0 for all experiments, to then diverge while $p$ increases.
A number of additional observations can be made looking at individual configurations for deviation or uncertainty, or at specific scatter plots. When only uncertainty on activity labels is added to the event log, we see a deterioration of the upper bound for conformance cost, but the lower bound does not improve -- in fact, it is essentially constant. This can be attributed to the fact that, since to generate uncertainty on activity label we sample from the set of labels randomly, the chances of observing a realization of a trace where an uncertain activity label matches the alteration introduced by the deviations are small. Uncertainty on timestamps makes the lower bound decrease only when the introduced deviations are swaps: as expected, the possibility of changing the order of pairs of events does not have a sensible improvement in the lower bound for deviation when extra events are added or activity labels of existing events are altered.

Conversely, the possibility to ``skip'' some critical events has a positive effect on the lower bound of all possible configurations for deviations: in fact, when marking some events as indeterminate in a log where extra events were added as deviations, the average conformance cost drops by 30.61\% at $p$ = 16\%, the largest drop among all the experiments. The experiment with all three types of uncertainty and extra events as deviations essentially displays the same effect (improvement in lower bound is slightly lower, but not significantly, with a decrease in deviation of 29.38\% at $p$ = 16\%).

For the experiments where all types of deviations were added at once, we can see that, as could be anticipated, the differences in deviation scores on the two bounds become smaller in relative terms (because of the very high amount of deviations at $p$ = 0\%), but larger in absolute terms. As per the previous experiments, the largest contributor in decreasing the conformance cost of the lower bound is the addition of indeterminate events, which by itself decreases the deviation cost by 13.92\% at $p$ = 16\%. In general, the vast variability in measuring the conformance of an uncertain log shows that, if all types of uncertainty can occur with high frequency in a process, the business owner should act on the uncertainty sources, since they will be a major obstacle in obtaining accurate measurements of process conformance. Vice versa, in the case of limited occurrences of uncertainty in event data, the algorithm here proposed is able to provide actionable bounds for conformance score, together with descriptions of best- and worst-case scenarios of process conformance for a given trace.

\begin{sidewaysfigure}[h]
	\includegraphics[width=\textwidth]{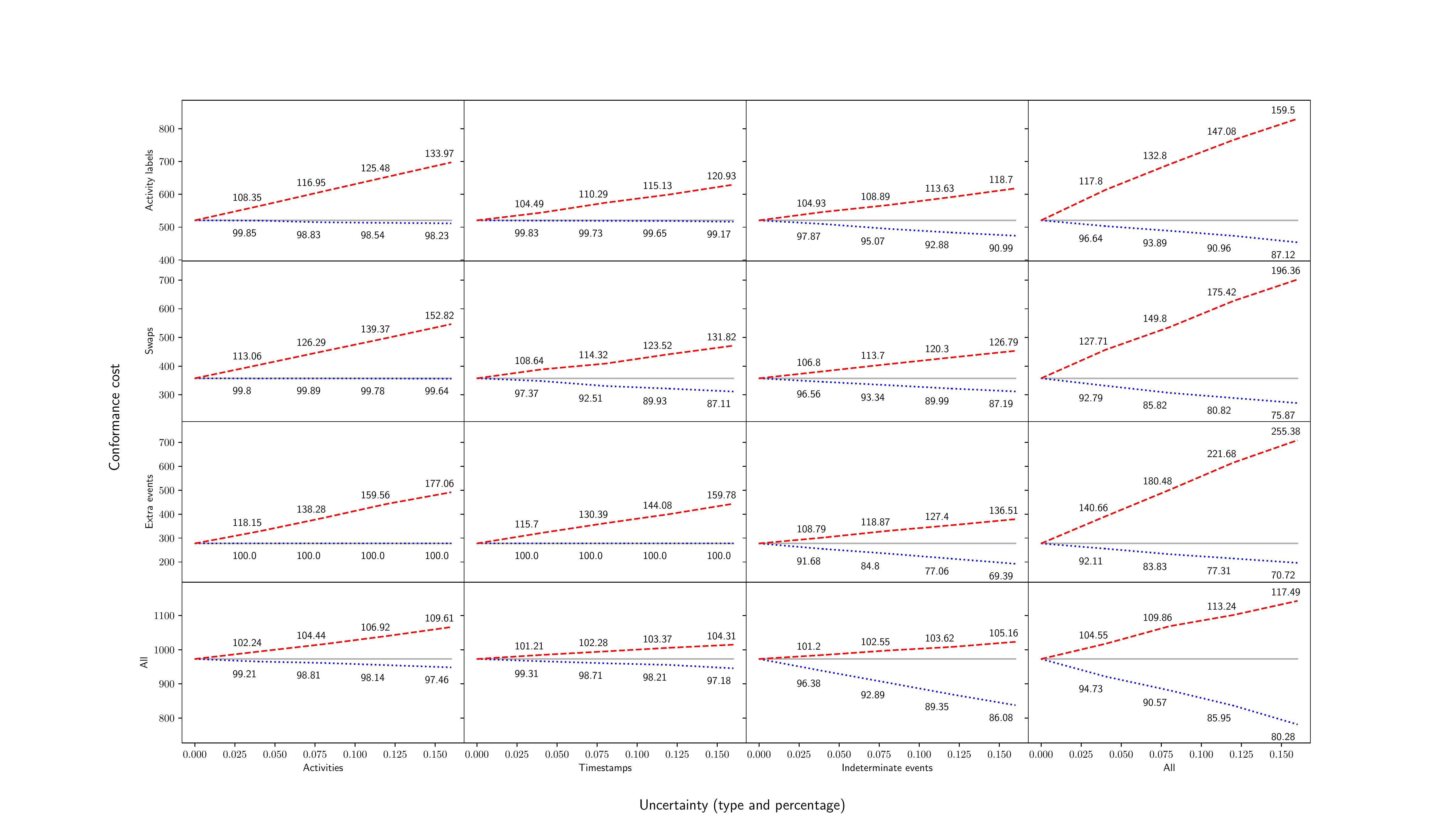}
	\caption{Upper (red, dashed) and lower (blue, dotted) bound for conformance cost for synthetic event logs with increasing uncertainty. Every plot shows a different configuration of deviation added to the log and types of uncertainty simulated in the event data. The x-axis shows the percentage of uncertainty $p$ added to the logs; the y-axis shows the amount of deviations, computed with alignments. The labels inside the graph indicate the relative change in deviation score with respect to $p$ = 0, as a percentage. The gray continuous lines indicate the amounts of deviations at $p$ = 0 as a reference.}
	\label{fig:qual_exps}
\end{sidewaysfigure}

\clearpage

The second experiment we set up aims to answer questions \emph{Q3} and \emph{Q4}, and is concerned with the performance of calculating the lower bound of the cost via the behavior net versus the brute-force method of listing all the realizations of an uncertain trace, evaluating all of them through alignments, then picking the best value. We used a constant percentage of uncertain events of $p$ = 5\% and logs of 100 traces for this test, with progressively increasing values of $n$. We ran 4 different experiments, each with one of the four configurations for uncertain behavior \emph{Activities}, \emph{Timestamps}, \emph{Indeterminate events} and \emph{All} illustrated above.

\begin{figure}[h]
	\centering
	\includegraphics[width=\textwidth]{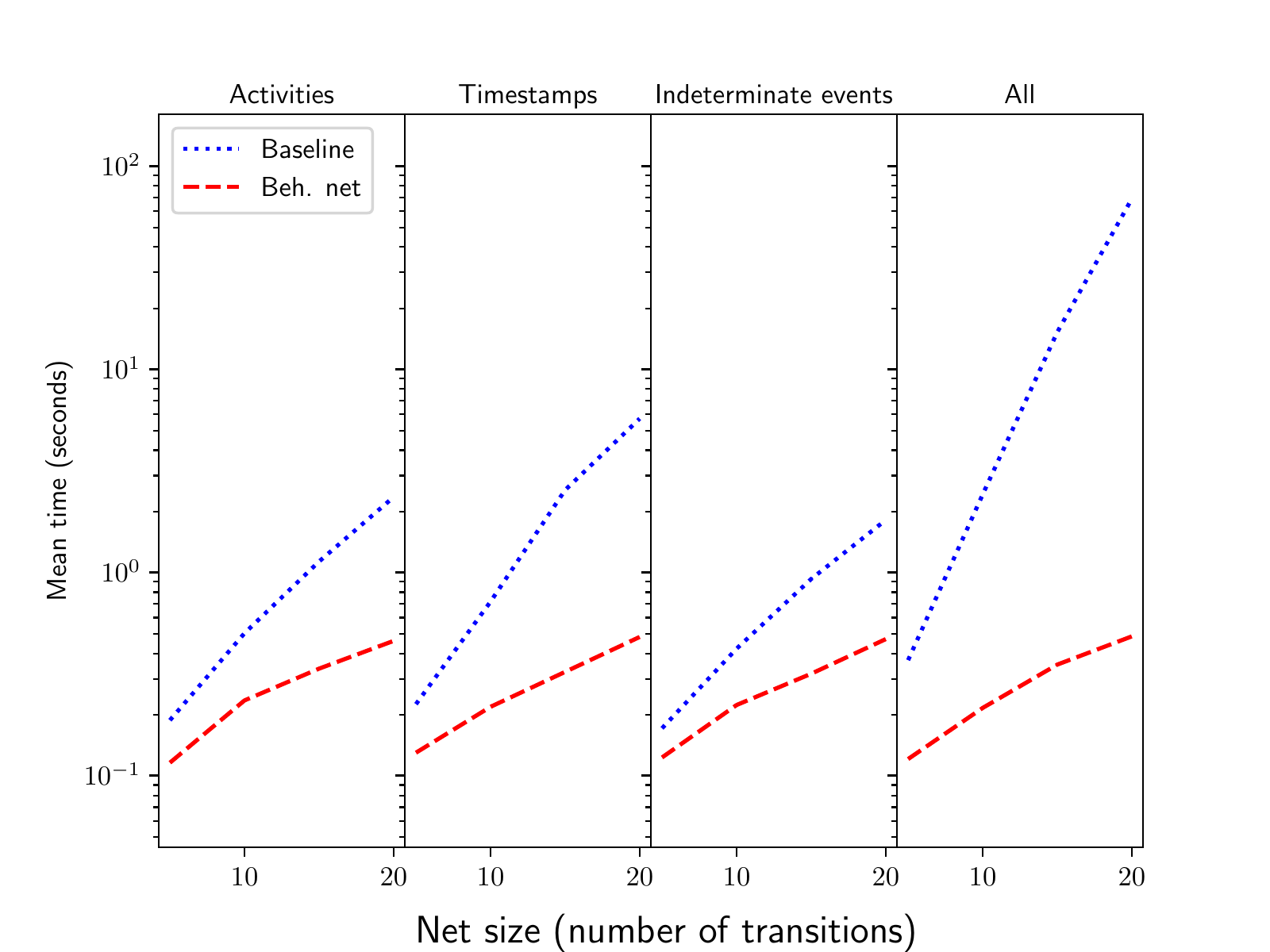}
	\caption{Effect on the time performance of calculating the lower bound for conformance cost with the brute-force method (blue) vs. the behavior net (red) on four different configurations for uncertain events.}
	\label{fig:performance}
\end{figure}

Figure~\ref{fig:performance} summarizes the results. As the diagram shows, the difference in time between the two methods tends to diverge quickly even on a logarithmic scale. The largest model we could test was $n$ = 20, a Petri net with 20 transitions, which is comparatively tiny in practical terms; however, even at these small scales the brute-force method takes roughly 3 orders of magnitude more than the time needed by the behavior net, when all the types of uncertainty are added with $p$ = 5\%.

This shows a very large improvement in the computing time for the lower bound computation; thus, the best-case scenario for the conformance cost of an uncertain trace can be obtained efficiently thanks to the structural properties of the behavior net. This graph also shows the dramatic impact on the number of realizations of a behavior net -- and thus, the time needed to perform a brute-force computation of alignments -- when the effects of different kinds of uncertainty are compounded.

Let us now answer \emph{Q5}. In order to assess the impact of uncertainty on the variability of event data and, consequently, on the performance of uncertain process mining techniques, we computed the total number of realizations in an uncertain event log of 100 traces. Figure~\ref{fig:num_real_size} shows the change in number of realizations with the increase in size of the Petri net used to generate the log, while the percentage of uncertain events is constant (5\%). Conversely, Figure~\ref{fig:num_real_unc} shows the change in the number of realizations with the increase in percentage of uncertain events, while the size of the Petri nets is fixed (10 transitions).

\begin{figure}[t]
	\centering
	\includegraphics[width=.7\textwidth, keepaspectratio]{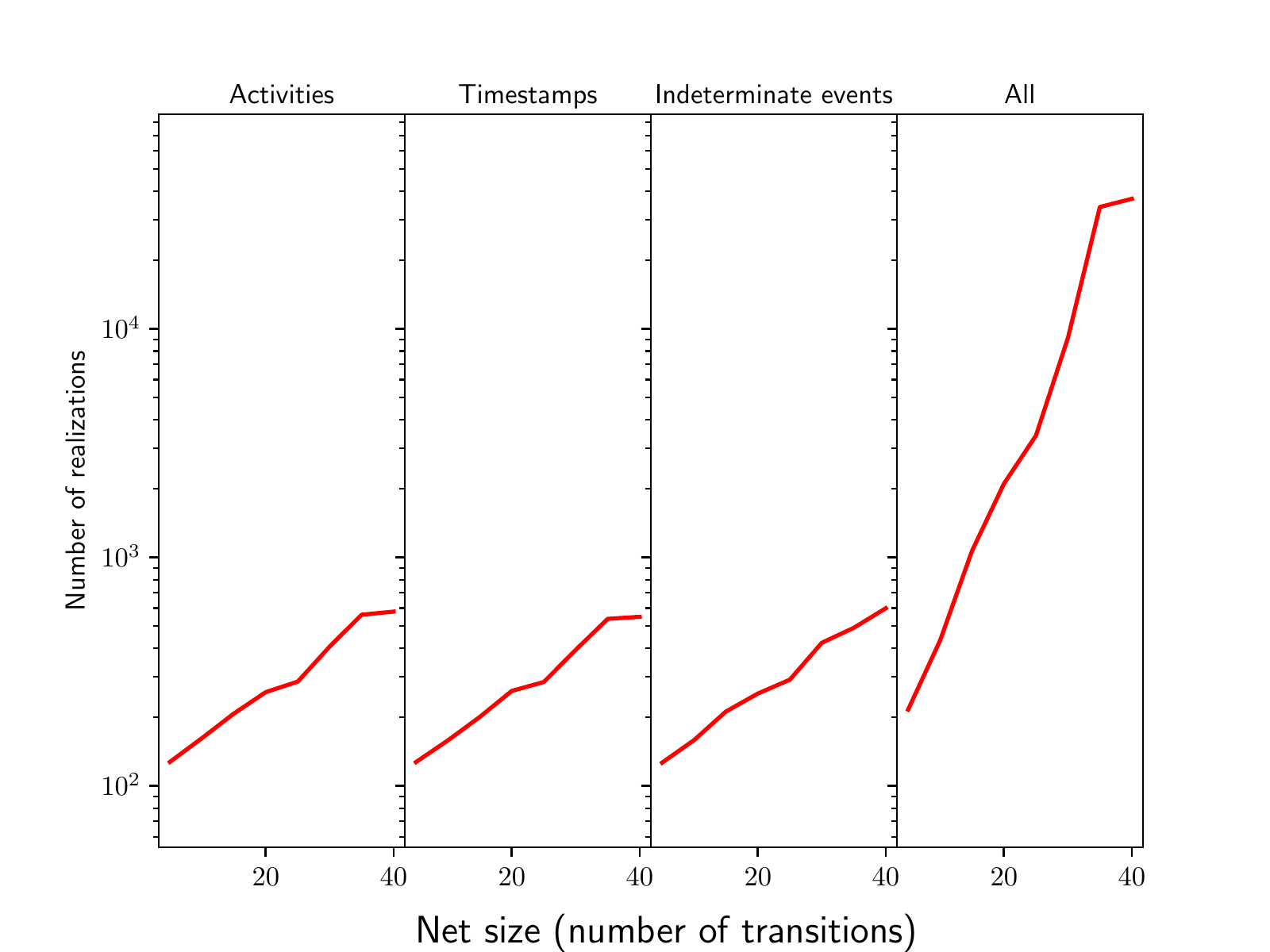}
	\caption{Number of realizations (average per log over 10 randomly generated logs of 100 traces) in an uncertain event log in function of the size of the Petri net used to generate it. The logs are generated through simulation with 10 different and randomly wired Petri nets of increasing size. Four different configurations for uncertainty are shown: on activities, timestamps, indeterminate events, and all three combined. Uncertainty is introduced in the events within the log in a fixed proportion of 5\%.}
	\label{fig:num_real_size}
\end{figure}

Figures~\ref{fig:num_real_size} and~\ref{fig:num_real_unc} justify the results of the experiments on performance shown in Figure~\ref{fig:performance}: there is a clear exponential relation between the number of realizations resulting from an uncertain event log and both trace length and percentage of uncertain events in the log. Both factors, when increasing, induce an exponential increase in the total number of realizations even when considered separately. Specifically, the comparison of Figures~\ref{fig:performance} and~\ref{fig:num_real_size} highlights the cause of the inefficiency of computing alignments for every realization of an uncertain trace, evidently showing the linear relationship between the number of realizations in an uncertain log and the time expenditure of the brute-force alignments approach.

\begin{figure}[H]
	\centering
	\includegraphics[width=.7\textwidth, keepaspectratio]{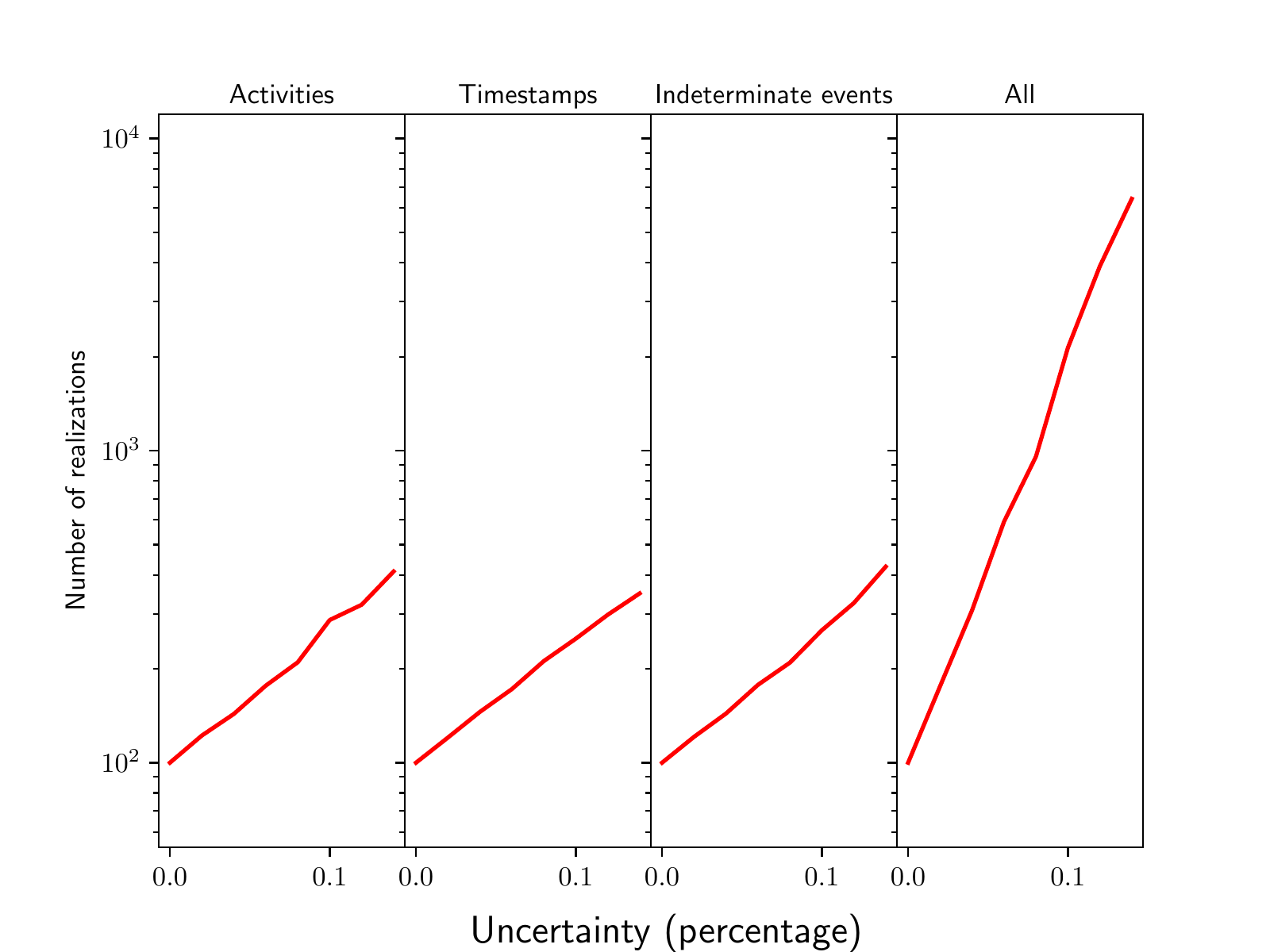}
	\caption{Number of realizations (average per log over 10 randomly generated logs of 100 traces) in an uncertain event log in function of the percentage of uncertain events within the log. The logs are generated through simulation with 10 different and randomly wired Petri nets with a fixed size of 10 transitions. Four different configurations for uncertainty are shown: on activities, timestamps, indeterminate events, and all three combined.}
	\label{fig:num_real_unc}
\end{figure}

\subsection{Applications on Real-Life Data}
As illustrated in Section~\ref{sec:introduction}, uncertainty in event data can originate from a number of different causes in real-world applications. One prominent source of uncertainty is missing data: attribute values not recorded in an event log can on occasions be described by uncertainty, through domain knowledge provided by process owners or experts. Then, as described in this paper, it is possible to obtain a detailed analysis of the deviations of a best- and worst-case scenario for the conformance to a process model.

To seek to answer research question \emph{Q6} through a direct application of conformance checking over uncertainty, let us consider a process related to the medical procedures performed in the Intensive Care Unit (ICU) of a hospital. Figure~\ref{fig:procmodel} shows a ground truth model for the process.

\begin{figure}[t]
	\centering
	\begin{adjustbox}{center}
		\begin{tikzpicture}[node distance=.7cm and .35cm, >=stealth']
		
		\tikzstyle{place} = [circle,draw,thick,minimum size=6mm]
		\tikzstyle{transition} = [rectangle,draw,thick,minimum size=4mm]
		\tikzstyle{invisible} = [transition, fill=black]
		\tikzstyle{finaltoken} = [token, fill=black!30]
		
		\node [place,tokens=1] (p1) [] {};
		
		\node [transition] (t1) [below= of p1,label=left:{\scriptsize $t_1$}] {Access};
		\draw [->] (p1) to (t1.north);
		
		\node [place] (p2) [below= of t1] {};
		\draw [->] (t1.south) to (p2);
		
		\node [transition] (t2) [below= of p2,label=below:{\scriptsize $t_2$}] {Triage};
		\draw [->] (p2) to (t2.north);
		
		\node [place] (p3) [above right= of t2] {};
		\node [place] (p4) [below right= of t2] {};
		\node [place] (p5) [above= 1.4cm of p3] {};
		\node [place] (p6) [below= 1.4cm of p4] {};
		\draw [->] (t2.east) to (p3);
		\draw [->] (t2.east) to (p4);
		\draw [->] (t2.east) to (p5);
		\draw [->] (t2.east) to (p6);
		
		\node [transition] (t8) [above right= .1cm and .7cm of p6,label=above:{\scriptsize $t_3$}] {R1};
		\draw [->] (p6) to (t8.west);
		
		\node [place] (p7) [right= of t8] {};
		\draw [->] (t8.east) to (p7);
		
		\node [transition] (t9) [right= of p7,label=above:{\scriptsize $t_4$}] {R2};
		\draw [->] (p7) to (t9.west);
		
		\node [place] (p8) [right= of t9] {};
		\draw [->] (t9.east) to (p8);
		
		\node [transition] (t10) [right= of p8,label=above:{\scriptsize $t_5$}] {R3};
		\draw [->] (p8) to (t10.west);
		
		\node [place] (p9) [right= of t10] {};
		\draw [->] (t10.east) to (p9);
		
		\node [transition] (t11) [right= of p9,label=above:{\scriptsize $t_6$}] {R4};
		\draw [->] (p9) to (t11.west);
		
		\node [place] (p10) [below right= .1cm and .7cm of t11] {};
		\draw [->] (t11.east) to (p10);
		
		\node [invisible] (t12) at ($(p6)!0.5!(p10)$) [below= .1cm,label=above:{\scriptsize $t_7$}] {Laboratory};
		\draw [->] (p6) to (t12);
		\draw [->] (t12) to (p10);
		
		\node [place] (p11) [above= 1.4cm of p10] {};
		
		\node [place] (p12) [above= 1.7cm of p11] {};
		
		\node [place] (p13) [above= 1.4cm of p12] {};
		
		\node [transition] (t3) at ($(p5)!0.5!(p13)$) [,label=above:{\scriptsize $t_8$}] {Visit};
		\draw [->] (p5) to (t3.west);
		\draw [->] (t3.east) to (p13);
		
		\node [place] (p31) at ($(p3)!0.5!(p12)$) [above= .4cm] {};
		
		\node [transition] (t4) at ($(p3)!0.5!(p31)$) [above= .1cm,label=above:{\scriptsize $t_9$}] {ConsultancyBegin};
		\draw [->] (p3) to (t4.west);
		\draw [->] (t4.east) to (p31);
		
		\node [transition] (t41) at ($(p31)!0.5!(p12)$) [above= .1cm,label=above:{\scriptsize $t_{10}$}] {ConsultancyEnd};
		\draw [->] (p31) to (t41.west);
		\draw [->] (t41.east) to (p12);
		
		\node [invisible] (t5) at ($(p3)!0.5!(p12)$) [below= .1cm,label=above:{\scriptsize $t_{11}$}] {Consultancy};
		\draw [->] (p3) to (t5.west);
		\draw [->] (t5.east) to (p12);
		
		\node [place] (p41) at ($(p4)!0.5!(p11)$) [above= .4cm] {};
		
		\node [transition] (t6) at ($(p4)!0.5!(p41)$) [above= .1cm,label=above:{\scriptsize $t_{12}$}] {Laboratory - Begin};
		\draw [->] (p4) to (t6.west);
		\draw [->] (t6.east) to (p41);
		
		\node [transition] (t61) at ($(p41)!0.5!(p11)$) [above= .1cm,label=above:{\scriptsize $t_{13}$}] {Laboratory - End};
		\draw [->] (p41) to (t61.west);
		\draw [->] (t61.east) to (p11);
		
		\node [invisible] (t7) at ($(p4)!0.5!(p11)$) [below= .1cm,label=above:{\scriptsize $t_{14}$}] {Laboratory};
		\draw [->] (p4) to (t7.west);
		\draw [->] (t7.east) to (p11);
		
		\node [transition] (t8) [above right= of p11,label=below:{\scriptsize $t_{15}$}] {Dismissal};
		\draw [->] (p10) to (t8.west);
		\draw [->] (p11) to (t8.west);
		\draw [->] (p12) to (t8.west);
		\draw [->] (p13) to (t8.west);
		
		\node [place] (p14) [above= of t8] {};
		\draw [->] (t8.north) to (p14);
		
		\node [transition] (t9) [above= of p14,label=right:{\scriptsize $t_{16}$}] {Exit};
		\draw [->] (p14) to (t9.south);
		
		\node [place] (p15) [above= of t9] {};
		\draw [->] (t9.north) to (p15);
		\node[finaltoken] at (p15) {};
		
		\end{tikzpicture}
	\end{adjustbox}
	\caption{The Petri net that models the process related to the treatment of patients in the ICU ward of an Italian hospital. The activities R1 through R4 are abbreviations for the four phases of a radiology exam: respectively, \emph{Radiology - Submitted Request}, \emph{Radiology - Accepted Request}, \emph{Radiology - Exam}, \emph{Radiology - Results}.}
	\label{fig:procmodel}
\end{figure}
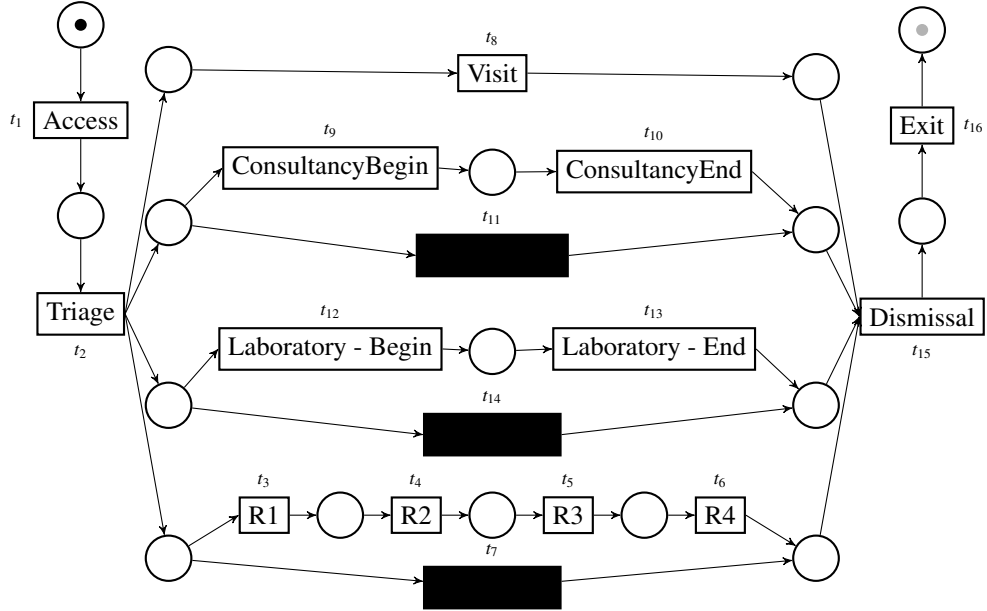

An execution log containing events that concern this ICU process is available. Throughout the process, some anomalies with attribute values can be spotted -- namely, a number of anomalies affecting the timestamp attributes. This is a $[\text{E}]_\mathbb{S}$-type uncertain log.

The alterations on the timestamps in this event log happen for a number of reasons. The domain experts reported that human error is a frequent source of anomaly, which is worsened by the fact that operators often do not input data in real-time, but the information is recorded after a certain delay (e.g., at the end of a shift). Moreover, the information systems of the ICU ward and other wards (such as radiology, for instance) do not allow for automatic transmission of data between one another, so in some occurrences the timestamp of visits by specialists is not recorded in the ICU information system.

Tables~\ref{table:uncertaintraceicu1} and~\ref{table:uncertaintraceicu2} show two examples of traces with anomalous timestamp behavior. We can see that in the trace of Table~\ref{table:uncertaintraceicu1} the event \emph{Triage} has an imprecise timestamp -- only the day has been recorded. This can be modeled with an uncertain timestamp encompassing a range of 24 hours. The column \emph{Preprocessed Timestamp} shows the results of this preprocessing step.

\begin{table}[H]
	\caption{Events related to one case of the ICU process. The timestamp of the ``Triage'' event is imprecise: through domain knowledge, we are able to represent this uncertainty in an explicit way within the event attributes in the log.}
	\label{table:uncertaintraceicu1}
	\centering
	\small
	\begin{adjustbox}{center}
		\begin{tabular}{cccc}
			\textbf{Event ID}				& \textbf{Raw Timestamp}					& \textbf{Preprocessed Timestamp}										& \textbf{Activity}									\\ \hline
			\multicolumn{1}{|c|}{$e_1$}		& \multicolumn{1}{c|}{2017-02-20 23:59:31}	& \multicolumn{1}{c|}{2017-02-20 23:59:31}							& \multicolumn{1}{c|}{\emph{Access}}				\\ \hline
			\multicolumn{1}{|c|}{$e_2$}		& \multicolumn{1}{c|}{2017-02-21 00:02:58}	& \multicolumn{1}{c|}{2017-02-21 00:02:58}							& \multicolumn{1}{c|}{\emph{Visit}}					\\ \hline
			\multicolumn{1}{|c|}{$e_3$}		& \multicolumn{1}{c|}{2017-02-21 00:06:30}	& \multicolumn{1}{c|}{2017-02-21 00:06:30}							& \multicolumn{1}{c|}{\emph{ConsultancyBegin}}	\\ \hline
			\multicolumn{1}{|c|}{$e_4$}		& \multicolumn{1}{c|}{2017-02-21 00:29:12}	& \multicolumn{1}{c|}{2017-02-21 00:29:12}							& \multicolumn{1}{c|}{\emph{R1}}					\\ \hline
			\multicolumn{1}{|c|}{$e_5$}		& \multicolumn{1}{c|}{2017-02-21 00:41:00}	& \multicolumn{1}{c|}{2017-02-21 00:41:00}							& \multicolumn{1}{c|}{\emph{R2}}					\\ \hline
			\multicolumn{1}{|c|}{$e_6$}		& \multicolumn{1}{c|}{2017-02-21 00:41:00}	& \multicolumn{1}{c|}{2017-02-21 00:41:00}							& \multicolumn{1}{c|}{\emph{R3}}					\\ \hline
			\multicolumn{1}{|c|}{$e_7$}		& \multicolumn{1}{c|}{2017-02-21 01:02:00}	& \multicolumn{1}{c|}{2017-02-21 01:02:00}							& \multicolumn{1}{c|}{\emph{R4}}					\\ \hline
			\multicolumn{1}{|c|}{$e_8$}		& \multicolumn{1}{c|}{2017-02-21 01:56:26}	& \multicolumn{1}{c|}{2017-02-21 01:56:26}							& \multicolumn{1}{c|}{\emph{ConsultancyEnd}}		\\ \hline
			\multicolumn{1}{|c|}{$e_9$}		& \multicolumn{1}{c|}{2017-02-21 02:01:37}	& \multicolumn{1}{c|}{2017-02-21 02:01:37}							& \multicolumn{1}{c|}{\emph{Dismissal}}				\\ \hline
			\multicolumn{1}{|c|}{$e_{10}$}	& \multicolumn{1}{c|}{2017-02-21 02:02:36}	& \multicolumn{1}{c|}{2017-02-21 02:02:36}							& \multicolumn{1}{c|}{\emph{Exit}}					\\ \hline
			\multicolumn{1}{|c|}{$e_{11}$}	& \multicolumn{1}{c|}{2017-02-21}			& \multicolumn{1}{c|}{[2017-02-21 00:00:00, 2017-02-21 23:59:59]}	& \multicolumn{1}{c|}{\emph{Triage}}				\\ \hline
		\end{tabular}
	\end{adjustbox}
	\normalsize
\end{table}

Some of the events in the trace of Table~\ref{table:uncertaintraceicu2} are missing the timestamp value entirely. In this case, we can resort to domain knowledge provided by the process owners: it is known that events related to the \emph{Radiology} exams happen after the \emph{Triage} event, and before the \emph{Dismissal} event. This allows the representation of the timestamps with ranges of possible values. Notice that such a small interval of time, obtainable from the domain knowledge available, is preferable to larger possible intervals (e.g., 2017-08-27 00:00:00 to 2017-08-27 23:59:59), since it minimizes the amount of possible overlaps in time with other events in the trace. In turn, this means that the number of possible realizations of the uncertain trace is smaller, granting a faster conformance checking. As before, the results of modeling timestamp uncertainty are shown in the column \emph{Preprocessed Timestamp}.

\begin{table}[]
	\caption{Events related to one case of the ICU process. Some of the timestamp attributes are missing: through domain knowledge, we are able to represent them with uncertainty within a small interval of time. The timestamps in bold and italic of the ``Raw Timestamp'' column are used to set the interval boundaries for uncertain timestamps.}
	\label{table:uncertaintraceicu2}
	\centering
	\small
	\begin{adjustbox}{center}
		\begin{tabular}{cccc}
			\textbf{Event ID}				& \textbf{Raw Timestamp}					& \textbf{Preprocessed Timestamp}										& \textbf{Activity}									\\ \hline
			\multicolumn{1}{|c|}{$e_1$}		& \multicolumn{1}{c|}{2017-08-27 11:47:46}	& \multicolumn{1}{c|}{2017-08-27 11:47:46}							& \multicolumn{1}{c|}{\emph{Access}}				\\ \hline
			\multicolumn{1}{|c|}{$e_2$}		& \multicolumn{1}{c|}{2017-08-27 \textbf{11:47:53}}	& \multicolumn{1}{c|}{2017-08-27 11:47:53}							& \multicolumn{1}{c|}{\emph{Triage}}				\\ \hline
			\multicolumn{1}{|c|}{$e_3$}		& \multicolumn{1}{c|}{2017-08-27 12:14:25}	& \multicolumn{1}{c|}{2017-08-27 12:14:25}							& \multicolumn{1}{c|}{\emph{Visit}}					\\ \hline
			\multicolumn{1}{|c|}{$e_4$}		& \multicolumn{1}{c|}{2017-08-27 12:33:24}	& \multicolumn{1}{c|}{2017-08-27 12:33:24}							& \multicolumn{1}{c|}{\emph{R1}}					\\ \hline
			\multicolumn{1}{|c|}{$e_5$}		& \multicolumn{1}{c|}{2017-08-27 13:04:11}	& \multicolumn{1}{c|}{2017-08-27 13:04:11}							& \multicolumn{1}{c|}{\emph{ConsultancyBegin}}	\\ \hline
			\multicolumn{1}{|c|}{$e_6$}		& \multicolumn{1}{c|}{2017-08-27 \emph{13:04:53}}	& \multicolumn{1}{c|}{2017-08-27 13:04:53}							& \multicolumn{1}{c|}{\emph{Dismissal}}				\\ \hline
			\multicolumn{1}{|c|}{$e_7$}		& \multicolumn{1}{c|}{2017-08-27 13:08:07}	& \multicolumn{1}{c|}{2017-08-27 13:08:07}							& \multicolumn{1}{c|}{\emph{Exit}}					\\ \hline
			\multicolumn{1}{|c|}{$e_8$}		& \multicolumn{1}{c|}{\textbf{NULL}}		& \multicolumn{1}{c|}{[2017-08-27 \textbf{11:47:53}, 2017-08-27 \emph{13:04:53}]}	& \multicolumn{1}{c|}{\emph{ConsultancyEnd}}		\\ \hline
			\multicolumn{1}{|c|}{$e_9$}		& \multicolumn{1}{c|}{\textbf{NULL}}		& \multicolumn{1}{c|}{[2017-08-27 \textbf{11:47:53}, 2017-08-27 \emph{13:04:53}]}	& \multicolumn{1}{c|}{\emph{R2}}					\\ \hline
			\multicolumn{1}{|c|}{$e_{10}$}	& \multicolumn{1}{c|}{\textbf{NULL}}		& \multicolumn{1}{c|}{[2017-08-27 \textbf{11:47:53}, 2017-08-27 \emph{13:04:53}]}	& \multicolumn{1}{c|}{\emph{R3}}					\\ \hline
			\multicolumn{1}{|c|}{$e_{11}$}	& \multicolumn{1}{c|}{\textbf{NULL}}		& \multicolumn{1}{c|}{[2017-08-27 \textbf{11:47:53}, 2017-08-27 \emph{13:04:53}]}	& \multicolumn{1}{c|}{\emph{R4}}					\\ \hline
		\end{tabular}
	\end{adjustbox}
	\normalsize
\end{table}

Once uncertainty is made explicit using the event log formally defined in this paper, it is possible to apply conformance checking over uncertainty. The technique of alignments illustrated here provides two results, corresponding to the lower and upper bound for the conformance score. The traces shown in Tables~\ref{table:uncertaintraceicu1} and~\ref{table:uncertaintraceicu2} have a best-case scenario alignment in common, which is shown in Table~\ref{table:bestalignmentsuncertaintracesicu}; aligning through the behavior net of these traces has allowed the algorithm to select a value for the uncertain timestamps of the traces (translated in a specific ordering) such that the deviations between data and model is the smallest possible. For both traces, the best-case scenario has a cost equal to 0, thus, no deviations occur in that case.

\begin{table}[]
	\caption{A valid alignment for both traces of Tables~\ref{table:uncertaintraceicu1} and~\ref{table:uncertaintraceicu2}. This alignment has a deviation cost equal to 0, and corresponds to a best-case scenario for conformance between the process model and both uncertain traces.}
	\label{table:bestalignmentsuncertaintracesicu}
	\centering
	\scriptsize
	\begin{tabular}{cccccccccccc}
		\multicolumn{1}{|c|}{Access}	& \multicolumn{1}{c|}{Triage}	& \multicolumn{1}{c|}{Visit}	& \multicolumn{1}{c|}{ConsultancyBegin}	& \multicolumn{1}{c|}{R1}		& \multicolumn{1}{c|}{R2}		& \multicolumn{1}{c|}{R3}		& \multicolumn{1}{c|}{R4}		& \multicolumn{1}{c|}{ConsultancyEnd}	& \multicolumn{1}{c|}{$\nomove$}		& \multicolumn{1}{c|}{Dismissal}	& \multicolumn{1}{c|}{Exit}		\\ \hline
		\multicolumn{1}{|c|}{Access}	& \multicolumn{1}{c|}{Triage}	& \multicolumn{1}{c|}{Visit}	& \multicolumn{1}{c|}{ConsultancyBegin}	& \multicolumn{1}{c|}{R1}		& \multicolumn{1}{c|}{R2}		& \multicolumn{1}{c|}{R3}		& \multicolumn{1}{c|}{R4}		& \multicolumn{1}{c|}{ConsultancyEnd}	& \multicolumn{1}{c|}{$\tau$}			& \multicolumn{1}{c|}{Dismissal}	& \multicolumn{1}{c|}{Exit}		\\ 
		\multicolumn{1}{|c|}{$t_1$}		& \multicolumn{1}{c|}{$t_2$}	& \multicolumn{1}{c|}{$t_8$}	& \multicolumn{1}{c|}{$t_9$}				& \multicolumn{1}{c|}{$t_3$}	& \multicolumn{1}{c|}{$t_4$}	& \multicolumn{1}{c|}{$t_5$}	& \multicolumn{1}{c|}{$t_6$}	& \multicolumn{1}{c|}{$t_{10}$}				& \multicolumn{1}{c|}{$t_{14}$}			& \multicolumn{1}{c|}{$t_{15}$}		& \multicolumn{1}{c|}{$t_{16}$}	\\ 
	\end{tabular}
	\normalsize
\end{table}

Let us now look at the worst-case scenarios. One of the alignments with the worst possible score for the trace in Table~\ref{table:uncertaintraceicu1} is shown in Table~\ref{table:worstalignmentsuncertaintraceicu1}. In this scenario, the deviations are one move on model (the \emph{Triage} activity should have occurred after the \emph{Access} but did not), and one move on log (the activity \emph{Triage} occurs in the data at an unexpected moment in the process).

\begin{table}[]
	\caption{A valid alignment for the trace of Table~\ref{table:uncertaintraceicu1}. This alignment has a deviation cost equal to 2 (1 move on log and 1 move on model), and corresponds to a worst-case scenario for conformance between the process model and the uncertain trace.}
	\label{table:worstalignmentsuncertaintraceicu1}
	\centering
	\scriptsize
	\begin{adjustbox}{center}
		\begin{tabular}{ccccccccccccc}
			\multicolumn{1}{|c|}{Access}	& \multicolumn{1}{c|}{$\nomove$}	& \multicolumn{1}{c|}{Visit}	& \multicolumn{1}{c|}{ConsultancyBegin}	& \multicolumn{1}{c|}{R1}		& \multicolumn{1}{c|}{R2}		& \multicolumn{1}{c|}{R3}		& \multicolumn{1}{c|}{R4}		& \multicolumn{1}{c|}{ConsultancyEnd}	& \multicolumn{1}{c|}{$\nomove$}		& \multicolumn{1}{c|}{Dismissal}	& \multicolumn{1}{c|}{Exit}		& \multicolumn{1}{c|}{Triage}		\\ \hline
			\multicolumn{1}{|c|}{Access}	& \multicolumn{1}{c|}{Triage}		& \multicolumn{1}{c|}{Visit}	& \multicolumn{1}{c|}{ConsultancyBegin}	& \multicolumn{1}{c|}{R1}		& \multicolumn{1}{c|}{R2}		& \multicolumn{1}{c|}{R3}		& \multicolumn{1}{c|}{R4}		& \multicolumn{1}{c|}{ConsultancyEnd}	& \multicolumn{1}{c|}{$\tau$}			& \multicolumn{1}{c|}{Dismissal}	& \multicolumn{1}{c|}{Exit}		& \multicolumn{1}{c|}{$\nomove$}	\\ 
			\multicolumn{1}{|c|}{$t_1$}		& \multicolumn{1}{c|}{$t_2$}		& \multicolumn{1}{c|}{$t_8$}	& \multicolumn{1}{c|}{$t_9$}				& \multicolumn{1}{c|}{$t_3$}	& \multicolumn{1}{c|}{$t_4$}	& \multicolumn{1}{c|}{$t_5$}	& \multicolumn{1}{c|}{$t_6$}	& \multicolumn{1}{c|}{$t_{10}$}				& \multicolumn{1}{c|}{$t_{14}$}			& \multicolumn{1}{c|}{$t_{15}$}		& \multicolumn{1}{c|}{$t_{16}$}	& \multicolumn{1}{c|}{}		\\ 
		\end{tabular}
	\end{adjustbox}
	\normalsize
\end{table}

A worst-case scenario for the trace in Table~\ref{table:uncertaintraceicu2} is illustrated in Table~\ref{table:worstalignmentsuncertaintraceicu2}. In this case, the deviation is equal to 6, given by the wrong order of the event related to the \emph{Radiology} exam. Note that, in this example, we assume that every deviation has a unit cost, but the alignment technique allows to define different costs for different types of deviations based on impact in the process. For example, a patient that exits the hospital without official dismissal might have a worse impact than an unauthorized laboratory exam. For simplicity, in this case, we assume that all types of deviation have a unit cost.

\begin{table}[h]
	\caption{A valid alignment for the trace of Table~\ref{table:uncertaintraceicu2}. This alignment has a cost equal to 6 (3 moves on log and 3 moves on model), and corresponds to the worst-case scenario for conformance between the process model and the uncertain trace.}
	\label{table:worstalignmentsuncertaintraceicu2}
	\centering
	\scriptsize
	\begin{adjustbox}{center}
		\begin{tabular}{ccccccccccccccc}
			\multicolumn{1}{|c|}{Access}	& \multicolumn{1}{c|}{Triage}	& \multicolumn{1}{c|}{Visit}	& \multicolumn{1}{c|}{ConsultancyBegin}	& \multicolumn{1}{c|}{$\nomove$}	& \multicolumn{1}{c|}{$\nomove$}	& \multicolumn{1}{c|}{$\nomove$}	& \multicolumn{1}{c|}{R4}		& \multicolumn{1}{c|}{R3}		& \multicolumn{1}{c|}{R2}		& \multicolumn{1}{c|}{R1}		& \multicolumn{1}{c|}{ConsultancyEnd}	& \multicolumn{1}{c|}{$\nomove$}		& \multicolumn{1}{c|}{Dismissal}	& \multicolumn{1}{c|}{Exit}		\\ \hline
			\multicolumn{1}{|c|}{Access}	& \multicolumn{1}{c|}{Triage}	& \multicolumn{1}{c|}{Visit}	& \multicolumn{1}{c|}{ConsultancyBegin}	& \multicolumn{1}{c|}{R1}			& \multicolumn{1}{c|}{R2}			& \multicolumn{1}{c|}{R3}			& \multicolumn{1}{c|}{R4}		& \multicolumn{1}{c|}{$\tau$}	& \multicolumn{1}{c|}{$\tau$}	& \multicolumn{1}{c|}{$\tau$}	& \multicolumn{1}{c|}{ConsultancyEnd}	& \multicolumn{1}{c|}{$\tau$}			& \multicolumn{1}{c|}{Dismissal}	& \multicolumn{1}{c|}{Exit}		\\ 
			\multicolumn{1}{|c|}{$t_1$}		& \multicolumn{1}{c|}{$t_2$}	& \multicolumn{1}{c|}{$t_8$}	& \multicolumn{1}{c|}{$t_9$}				& \multicolumn{1}{c|}{$t_3$}		& \multicolumn{1}{c|}{$t_4$}		& \multicolumn{1}{c|}{$t_5$}		& \multicolumn{1}{c|}{$t_6$}	& \multicolumn{1}{c|}{}			& \multicolumn{1}{c|}{}			& \multicolumn{1}{c|}{}			& \multicolumn{1}{c|}{$t_{10}$}				& \multicolumn{1}{c|}{$t_{14}$}			& \multicolumn{1}{c|}{$t_{15}$}		& \multicolumn{1}{c|}{$t_{16}$}	\\ 
		\end{tabular}
	\end{adjustbox}
	\normalsize
\end{table}

Uncertain alignments provide novel insights, not obtainable through existing conformance techniques. The process owner can utilize these results to gain insights and decide actions in regard of the process. In situations where quantified uncertainty is present or can be uncovered using domain knowledge in a pre-processing step, the potential violation shown in the worst-case scenario for traces such as the one in Table~\ref{table:uncertaintraceicu1} can be investigated, as well as the source of said uncertainty; the process owner can, furthermore, decide whether the consequences and the likelihood of the worst-case scenario are indicative of a need for a process restructuration, or whether the risk of such potential violation of the normative process model are not critical for the process execution. Conversely, if uncertainty in the event log remains non-quantified and the affected trace is treated as a regular process trace, the subsequent analysis will only consider one possible realization of the uncertain trace, possibly sampled at random. In this case, taking process management decisions that account for the best- and worst-case scenarios is not possible.

Lastly, it is important to notice an additional implication of the qualitative experiment described in this section. For the events $e_8$ through $e_{11}$ of the trace in Table~\ref{table:uncertaintraceicu2} we determined suitable bounds for uncertain timestamps through domain knowledge. In absence of such domain knowledge, it is still possible to apply process mining techniques for uncertain data to traces with missing timestamps: the lower (resp., upper) bound of such timestamps can be set to be smaller (resp., larger) than any other timestamp appearing in the uncertain trace. This models an event that, in the real process, might have occurred in any point in the trace. Thus, the resulting pre-processed event will be able to be considered by process mining techniques operating on uncertain events such as the alignment technique presented in this paper\footnote{This is limited to techniques that only consider the control-flow perspective of event data. Additional perspectives might not be modeled by this pre-processing technique (namely, in this case, the time/performance perspective).}. An analogous pre-processing procedure can be utilized on events with a missing activity label, by assigning an uncertain label containing all labels appearing in the event log. While these pre-processing techniques allow to apply process mining techniques to traces and events with missing attributes, it is important to bear in mind the consequences this might have in terms of performance. As demonstrated by the results of quantitative performance experiments shown in Figure~\ref{fig:num_real_unc}, a small percentage of uncertain events in a log induces a large amount of realizations. Thus, modeling missing timestamps or activity labels through uncertainty without restricting them with domain knowledge might be unfeasible when applied to substantial amounts of event data.

\section{Related Work}\label{sec:related}
This section discusses existing literature relevant to the problem of computing a conformance score between historical event data and a process model, as well as research addressing types of anomalies in recorded data similar to the notion of uncertainty presented in this paper.

\subsection{Conformance Checking}
The discipline of conformance checking, a subfield of process mining, is concerned with defining metrics to compare how well an event log matches a given process model. The input for this task consists of an execution log and a process model (most commonly a labeled Petri net) and the output is a measurement of the distance -- that is, the deviation -- between the model and the log, or the traces that compose the log. The two main goals of conformance checking are measuring the quality of a process discovery algorithm by comparing the discovered process model with the source event log, to verify the extent to which the model fits the log; and comparing an execution log with a normative process model (often defined partially or completely by hand) in order to verify the deviations between the rules governing the process and the tasks carried out in reality. Often, the conformance measure defined between logs (or traces) and models includes not only a distance in absolute terms, but also an indication of where and what deviated from the norm in the process. Conformance checking was introduced by Rozinat and van der Aalst~\cite{rozinat2008conformance}, who obtained a conformance measure by tracking counts of tokens during replay of traces in a Petri net. Despite the elevated computational complexity, state-of-the-art approaches are mostly based on alignments, introduced by Adriansyah et al.~\cite{adriansyah2010towards}.

The topic of conformance checking includes previous work that examines concepts connected to probability on the model side. The stochastic Petri net is an important extension of the Petri net model which probabilistically describes the time distance between the activation of transitions. Richter et al.~\cite{richter2020tade} utilize these models to extend conformance checking so the conformance score of event data can be finely tuned to account for deviations in the time dimension with respect to a reference stochastic Petri net. Another formalism involving probabilities is the Fork/Join network, a scheduling model that can represent the actions of resources in a process, complete with probabilities. More recently, Leemans et al.~\cite{leemans2021stochastic} devised a conformance checking technique able to measure deviations between stochastic Petri nets and event logs. Their method hinges on converting both the stochastic model and the event log in a so-called stochastic language, i.e., a probability distribution over process variants. They then employ the Earth Movers' Distance (EMD) to compute the difference between stochastic languages in term of distance between probability distributions. This allows to account for routing probabilities in the model, improving the reliability of conformance scores.
	
Senderovich et al.~\cite{senderovich2016conformance} show how to discover Fork/Join networks from an event log and a corresponding schedule for the process -- i.e., a description of the tasks involved in a process and an assignment between agents and tasks -- and utilize them to perform conformance checking. This allows to measure the predictive capabilities of the reference schedule -- or, alternatively, to quantify the deviation from the schedule present in an historical event log. Additionally, the authors complement this conformance checking approach on schedules with a process improvement algorithm that shortens the expected delay between tasks in the process.

It is important, however, to note that stochastic conformance checking is a concept that fundamentally differs from uncertainty as presented in this paper. It is essential to understand that strong uncertainty involves non-determinism, and the behavior contained in a strongly uncertain trace is completely probability-agnostic. Moreover, existing approaches for stochastic conformance checking assume the presence of probability information on the reference model, which is then compared with classic process traces; conversely, uncertainty specifically considers anomalies within recorded data, regardless of the nature or the semantics of the corresponding process model.

\subsection{Event Data Uncertainty}\label{sec:related_unc}
As mentioned, the occurrence of data containing uncertainty -- in a broad sense -- is common both in more classic disciplines like statistics and data mining~\cite{han2011data} and in process mining~\cite{van2011process}; and logs that show an explicit uncertainty in the control flow perspective can be classified in the lower levels of the quality ranking proposed in the process mining manifesto.

To historically position the topic of uncertain data, let us mention some previous work in the domain of data mining. A survey work offering a panoramic view of mining uncertain data is the one by Aggarwal and Philip~\cite{aggarwal2008survey}, which focuses with particular attention on the problem of uncertain data querying. Such data is represented on the basis of probabilistic databases~\cite{suciu2011probabilistic}, a foundational notion in the setting of uncertain data mining. A branch of data mining particularly related to process mining is frequent itemsets mining: an efficient algorithm to search for frequent itemsets over uncertain data, the U-Apriori, have been presented by Chui et al.~\cite{chui2007mining}.

Within process mining, there exist various techniques to deal with a kind of uncertainty different, albeit closely related, from the one that we analyze here: missing or incorrect data. This can be considered as a form of non-explicit uncertainty: no measure or indication on the nature of the uncertainty is given in the event log. The work of Suriadi et al.~\cite{suriadi2017event} provides a taxonomy for such issues in event logs, laying out a series of data patterns that model errors in process data. In these cases, and if this behavior is infrequent enough to allow the event log to remain meaningful, the most common way for existing process mining techniques to deal with missing data is by filtering out the affected traces and performing discovery and conformance checking on the resulting filtered event log. A case study illustrating such situation is, e.g., the work of Benevento et al.~\cite{benevento2019evaluating}. While filtering out missing values is straightforward, various methodologies of event log filtering have been proposed in the past to solve the problem of incorrect event attributes: the filtering can take place thanks to a reference model, which can be given as process specification~\cite{wang2015cleaning}, or from information discovered from the frequent and well-formed traces of the same event log; for example extracting an automaton from the frequent traces~\cite{conforti2017filtering}, computing conditional probabilities of frequent sequences of activities~\cite{sani2017improving}, or discovering a probabilistic automaton~\cite{van2018filtering}. In the latter cases, the noise is identified as infrequent behavior.

Some previous work attempt to repair the incorrect values in an event log. Conforti et al.~\cite{conforti2020automatic} propose an approach for the restoration of incorrect timestamps based on a log automaton, that repairs the total ordering of events in a trace based on correct frequent behavior. Fani Sani et al.~\cite{sani2018repairing} define outlier behavior as the unexpected occurrence of an event, the absence of an event that is supposed to happen, and the incorrect order of events in the trace; then, they propose a repairing method based on probabilistic analysis of the context of an outlier (events preceding or following the anomalous event). Again, both of these methods define anomalous/incorrect behavior on the basis of the frequency of occurrence.

The definition of uncertainty on activity labels as defined in the taxonomy of Section~\ref{sec:taxonomy} has not been, to the best of our knowledge, previously employed in the field of process mining. There are, however, related examples of anomalies or uncertainties on activity labels of events: for instance, the problem of matching event identifiers to normative activity labels~\cite{baier2013bridging}. In this case, an event is associated with only one activity label, but this association is not known. There are a number of techniques to estimate the correct association, included some that consider the data perspective, together with the control flow perspective~\cite{senderovich2016road}. Using this setting, van der Aa et al.~\cite{van2019efficient} proposed a technique to estimate bounds of conformance scores for event logs with unknown or partially known event-to-activity mapping. Another related domain is the many-to-one abstraction from low-level events to a higher order of activity labels, which can be performed via clustering events in minimal conflict groups~\cite{gunther2006mining} or representing low-level patterns with data Petri nets which then discovers high-level activities by matching patterns through alignments~\cite{mannhardt2016low}.

A kind of anomaly in event data which is even more related to uncertainty as discussed in this paper is incompleteness in the order of events in a trace. This occurs when total ordering among events is lost or not available, and only a partial order is known. In the field of concurrent and distributed systems, the absence of a total order among logged activities has historically been relevant by virtue of being both caused by, and a necessary condition for, the presence of concurrency in a system (refer e.g. to Beschastnikh et al.~\cite{beschastnikh2011mining}). An important concept at the base of this paper is the representation of uncertainties in the timestamp dimension through directed acyclic graphs, which express these partial orders. This intuition was first presented by Lu et al.~\cite{lu2014conformance}, also in the context of conformance checking, in order to produce partially ordered alignments. More recently, van der Aa et al.~\cite{van2020partial} proposed a technique to resolve such order uncertainty, through estimates based on probabilistic inference aided by a normative process model.

In process mining, a notion well known for a long time is the fact that in many cases the definition of the case is not part of the normative information immediately accessible to the process analyst, so there needs to be a decision on which attribute or attributes constitutes the case of the process. In some cases, multiple definitions of cases are possible and analysis on a subset of them is desirable. This specific setting, which can be interpreted as uncertainty on the case notion, has a long history both in terms of mathematical formalization and in terms of implementation and definition of data standards. For an introduction to this subfield of process mining we refer to~\cite{van2019object}.

This paper presents an extended version of the preliminary analysis on uncertain event data in process mining shown in~\cite{pegoraro2019mining}, in which we presented a preliminary description of uncertain event data and their taxonomy, as well as a description of an approach to find upper and lower bound for the conformance score of an uncertain process trace through alignments. We elaborate on this previous work adding an extended formalization, proving theorems on uncertainty in process mining, and reporting on new experiments. The framework for uncertain data proposed in this paper has also been expanded by providing an algorithm capable of process discovery on uncertain event data through the definition of directly-follows relationship in uncertain settings and the computation of an uncertain directly-follows graph, which enables process discovery techniques~\cite{pegoraro2019discovering}. On the topic of efficient uncertain data management, we presented an improved algorithm that allows to preprocess uncertain traces into behavior graphs in quadratic time, enabling fast uncertainty analysis~\cite{pegoraro2020efficient}. The exploration of uncertain event data can also be facilitated by a memory-efficient representation method and the definition of the concept of uncertain process variants~\cite{pegoraro2020efficient2}.

Lastly, it is important to mention some of the technological advancements that decrease the likelihood of the presence of uncertainty in the data. As discussed in Section~\ref{sec:introduction}, some of the most prominent causes of uncertainty are the human factor involved in the process, and the intrinsic limitations of legacy information systems. Besides more classical concepts like workflow automation~\cite{stohr2001workflow} and the deployment of process-aware information systems~\cite{dumas2005process}, a recent innovation that aims to mitigate both problems is \emph{Robotic Process Automation} (RPA)~\cite{aalst2018robotic}, a technology that aids user operations in processes by learning repeated patterns of actions, and subsequently automate them, while interacting with human operators through the same GUIs they are utilizing. Introducing a high level of automation within the process naturally helps towards the accurate recording of process data, especially if such automation assists the human agents involved in the process.

\section{Conclusion}\label{sec:conclusion}
As the need to quickly and effectively analyze process data has arisen in the recent past and is growing to this day, many new types of information regarding events are recorded; this calls for new techniques able to provide an adequate interpretation of the new data. Not only more and more event data is available to the analyst, but these data are accessible in association with a wealth of information and meta-information about the process, the resources that executed activities, data about the outcome of those actions, and many other types of knowledge about the nature of events, activities, and the process as a whole. In this paper, we presented a new paradigm for process mining applied to event data: explicit uncertainty. We described the possible form it can assume, building a taxonomy of different types of uncertainty, and we provided examples of how uncertainty can originate in a process, and how uncertainty information can be inferred from the available data and from domain knowledge provided by process experts. We then designed a framework to define the various flavors of uncertainty shown in the taxonomy. Then, in order to assess the practical applications of the uncertainty framework, we applied it to a well-consolidated technique for conformance checking: aligning data to a reference Petri net. This application of uncertainty analysis is integrated by theorems that prove the correctness of the techniques developed and illustrated here within the framework previously described. The results can provide insights on the possible violations of process instances recorded with uncertainty against a normative model. The behavior net provides an efficient way to compute the lower bound for the conformance cost -- i.e., the best-case scenario for conformity of uncertain process data -- with a large improvement in time performance with respect to a brute-force procedure.

The approaches shown here can be extended in a number of ways. From a performance perspective, to improve the usability of alignments over uncertainty the computation of the upper bound of the conformance cost should either be optimized, or replaced by an approximate algorithm. Another direction for future work is extending the conformance checking technique to logs with weak uncertainty, weighting the deviation by means of the probability distributions attached to activities, timestamps and indeterminate events. This includes the case in which probability distributions contained in weakly uncertain events are not necessarily independent, or where the assumption of independence is unrealistic for the process being analyzed. Furthermore, a limitation affecting the techniques presented in this paper is that using a graphical representation in lieu of process traces requires to process the entire trace. This implies that uncertain alignments can only be applied to data available in batches, while they do not support event data in streams. Future research might include the development of graphless (i.e., not reliant on graph structures) representations towards online process mining over uncertain event data.

Additionally, investigation on real-life data is an important milestone for this line of research, and it is vital to analyze in depth a complete use case in real life of process mining in the presence of uncertain event data.

\section*{Acknowledgements}
We thank the Alexander von Humboldt (AvH) Stiftung for supporting our research interactions. We acknowledge Elisabetta Benevento for her valuable input.




\section*{References}
\bibliographystyle{elsarticle-num} 
\bibliography{bibliography}


%
%
%
\end{document}